\theoremstyle{plain}
\newtheorem{theorem}{Theorem}[section]
\newtheorem{proposition}[theorem]{Proposition}
\newtheorem{lemma}[theorem]{Lemma}
\newtheorem{corollary}[theorem]{Corollary}
\theoremstyle{definition}
\newtheorem{problem}[theorem]{Problem}
\newtheorem{fact}[theorem]{Fact}
\theoremstyle{remark}
\newtheorem{remark}[theorem]{Remark}
\newcommand{\width}{\operatornamewithlimits{width}}
\newcommand{\argmax}{\operatornamewithlimits{argmax}}
\newcommand{\argmin}{\operatornamewithlimits{argmin}}
\newcommand{\E}{\mathbb{E}}
\icmltitlerunning{Thompson Sampling for (Combinatorial) Pure Exploration}
\begin{document}

\twocolumn[
\icmltitle{Thompson Sampling for (Combinatorial) Pure Exploration}

% It is OKAY to include author information, even for blind
% submissions: the style file will automatically remove it for you
% unless you've provided the [accepted] option to the icml2022
% package.

% List of affiliations: The first argument should be a (short)
% identifier you will use later to specify author affiliations
% Academic affiliations should list Department, University, City, Region, Country
% Industry affiliations should list Company, City, Region, Country

% You can specify symbols, otherwise they are numbered in order.
% Ideally, you should not use this facility. Affiliations will be numbered
% in order of appearance and this is the preferred way.
\icmlsetsymbol{equal}{*}

\begin{icmlauthorlist}
\icmlauthor{Siwei Wang}{tsinghua}
\icmlauthor{Jun Zhu}{tsinghua}
%\icmlauthor{Firstname3 Lastname3}{comp}
%\icmlauthor{Firstname4 Lastname4}{sch}
%\icmlauthor{Firstname5 Lastname5}{yyy}
%\icmlauthor{Firstname6 Lastname6}{sch,yyy,comp}
%\icmlauthor{Firstname7 Lastname7}{comp}
%\icmlauthor{}{sch}
%\icmlauthor{Firstname8 Lastname8}{sch}
%\icmlauthor{Firstname8 Lastname8}{yyy,comp}
%\icmlauthor{}{sch}
%\icmlauthor{}{sch}
\end{icmlauthorlist}

\icmlaffiliation{tsinghua}{Dept. of Comp. Sci. \& Tech., BNRist Center, Tsinghua-Bosch Joint ML Center, Tsinghua University}
%\icmlaffiliation{comp}{Company Name, Location, Country}
%\icmlaffiliation{sch}{School of ZZZ, Institute of WWW, Location, Country}
%\icmlcorrespondingauthor{Siwei Wang}{wangsw2020@tsinghua.edu.cn}
\icmlcorrespondingauthor{Jun Zhu}{dcszj@tsinghua.edu.cn}

% You may provide any keywords that you
% find helpful for describing your paper; these are used to populate
% the "keywords" metadata in the PDF but will not be shown in the document
\icmlkeywords{Combinatorial Pure Exploration, Thompson Sampling}

\vskip 0.3in
 ]

% this must go after the closing bracket ] following \twocolumn[ ...

% This command actually creates the footnote in the first column
% listing the affiliations and the copyright notice.
% The command takes one argument, which is text to display at the start of the footnote.
% The \icmlEqualContribution command is standard text for equal contribution.
% Remove it (just {}) if you do not need this facility.

\printAffiliationsAndNotice{}  % leave blank if no need to mention equal contribution
%\printAffiliationsAndNotice{\icmlEqualContribution} % otherwise use the standard text.

\begin{abstract}
%Pure exploration plays an important role in online learning. 
%
Existing methods of combinatorial pure exploration mainly focus on the UCB approach.
To make the algorithm efficient, they usually use the sum of upper confidence bounds within arm set $S$ to represent the upper confidence bound of $S$, which can be much larger than the tight upper confidence bound of $S$ and leads to a much higher complexity than necessary, since the empirical means of different arms in $S$ are independent.
%
%
%However, since the empirical means of different arms in $S$ are independent, this can be much larger than the exact upper confidence bound of $S$, and leads to a much higher complexity than necessary. 
%
To deal with this challenge, we explore the idea of Thompson Sampling (TS) that uses independent random samples instead of the upper confidence bounds, %to make  decisions,
and design the first TS-based algorithm TS-Explore for (combinatorial) pure exploration.
%
%In TS-Explore, we introduce a novel mechanism that records the reward gap between empirically best arm set and
%
In TS-Explore, the sum of independent random samples within arm set $S$ will not exceed the tight upper confidence bound of $S$ with high probability.
Hence it solves the above challenge, and achieves a lower complexity upper bound than existing efficient UCB-based algorithms in general combinatorial pure exploration. %
As for pure exploration of classic multi-armed bandit, we show that TS-Explore achieves an asymptotically optimal complexity upper bound.
\end{abstract}

\section{Introduction}

% Introduction of pure exploration problems in MAB.

Pure exploration is an important task in online learning, and it tries to find out the target arm as fast as possible. 
%
%Different from regret minimization \citep{Berry1985Bandit,Auer2002Finite} that needs to trade-off between exploration and exploitation, pure exploration only focuses on the exploration part, and tries to find out the target arm as fast as possible. 
%
In pure exploration of classic multi-armed bandit (MAB) \citep{Audibert2010Best}, there are totally $m$ arms, and each arm $i$ is associated with a probability distribution $D_i$ with mean $\mu_i$. 
Once arm $i$ is pulled, it returns an observation $r_i$, which is drawn independently from $D_i$ by the environment. 
At each time step $t$, the learning policy $\pi$ either chooses an arm $i(t)$ to pull, or chooses to output an arm $a(t)$. 
The goal of the learning policy is to pull arms properly, such that with an error probability at most $\delta$, its output arm $a(t)$ is the optimal arm (i.e., $a(t) = \argmax_{i \in [m]} \mu_i$) with complexity (the total number of observations) as low as possible.

%%%%%%%%%%%%%%%%%

% The widely adoption of pure exploration problems.

Pure exploration is widely adopted in real applications. 
For example, in the selling procedure of cosmetics, there is always a testing phase before the commercialization phase \citep{Audibert2010Best}. 
The goal of the testing phase is to help to maximize the cumulative reward collected in the commercialization phase. 
Therefore, instead of regret minimization \citep{Berry1985Bandit,Auer2002Finite}, the testing phase only needs to do exploration (e.g., to investigate which product is the most popular one), and wants to find out the target with both correctness guarantee and low cost.
%%%%%%%%%%%%%%%%%
% Combinatorial pure exploration problems and its applications.
%
In the real world, sometimes the system focuses on the best action under a specific combinatorial structure, instead of the best single arm \citep{Chen2014Combinatorial}. 
For example, a network routing system needs to search for the path with minimum delay between the source and the destination. 
Since there can be an exponential number of paths, the cost of exploring them separately is unacceptable. 
Therefore, people choose to find out the best path by exploring single edges, and this is a pure exploration problem instance in combinatorial multi-armed bandit (CMAB).
In this setting, we still pull single arms (base arms) at each time step, and there is a super arm set $\mathcal{I} \subseteq 2^{[m]}$. 
The expected reward of a super arm $S\in \mathcal{I}$ is $\sum_{i\in S} \mu_i$, i.e., the sum of the expected rewards of its contained base arms. 
And the goal of the player is to find out the optimal super arm with error probability at most $\delta$ and complexity as low as possible. 

%%%%%%%%%%%%%%%%%

% Most of existing researches focus on the UCB-based learning policies. But in this paper, we study the TS-based approach for pure exploration problems.

Most of the existing solutions for pure exploration follow the UCB approach \citep{Audibert2010Best,kalyanakrishnan2012PAC,Chen2014Combinatorial,kaufmann2013information}. They compute the confidence bounds for all the arms, and claim that one arm is optimal only if its lower confidence bound is larger than the upper confidence bounds of all the other arms. 
Though this approach is asymptotically optimal in pure exploration of classic MAB problems \citep{kalyanakrishnan2012PAC,kaufmann2013information}, it faces some challenges in the CMAB setting \citep{Chen2014Combinatorial}. 
In the UCB approach, for a super arm $S$, the algorithm usually uses the sum of upper confidence bounds of all its contained base arms as its upper confidence bound to achieve a low implementation cost.
%
%(this approach only requires a simple offline oracle to achieve a low implementation cost). % and does not need further assumptions on the combinatorial structure). %(to achieve low implementation cost). 
%
This means that the gap between the empirical mean of super arm $S$ and the used upper confidence bound of $S$ is about $\tilde{O}(\sum_{i\in S} \sqrt{1/N_i})$ (here $N_i$ is the number of observations on base arm $i$).
However, since the observations of different base arms are independent, the standard deviation of the empirical mean of super arm $S$ is $\tilde{O}(\sqrt{\sum_{i\in S} {1/N_i}})$, which is much smaller than $\tilde{O}(\sum_{i\in S} \sqrt{1/N_i})$.
This means that the used upper confidence bound of $S$ is much larger than the tight upper confidence bound of $S$.
\citet{combes2015combinatorial} deal with this problem by computing the upper confidence bounds for all the super arms independently, %
%However, this approach 
which leads to an exponential time cost and is hard to implement in practice. %and is what we want to avoid in efficient learning algorithms.
%
%%Instead, it can be much smaller (i.e., about $\sqrt{k}$ smaller, where $k$ is the size of the super arm) than the sum. %%
%
%%However, in the UCB approach, people use the sum of upper confidence bounds of these base arms as an upper confidence bound of the super arm, which %%
%
In fact, existing efficient UCB-based solutions either suffer from a high complexity bound \citep{Chen2014Combinatorial,gabillon2016improved}, or need further assumptions on the combinatorial structure %(so that they can do more optimization) 
to achieve an optimal complexity bound efficiently \citep{chen2017nearly}.

Then a natural solution to deal with this challenge is to use random samples of arm $i$ (with mean to be its empirical mean and standard deviation to be $\tilde{O}(\sqrt{1/ N_i})$) instead of its upper confidence bound to judge whether a super arm is optimal. 
If we let the random samples of different base arms be \emph{independent}, then with high probability, the gap between the empirical mean of super arm $S$ and the sum of random samples within $S$ is $\tilde{O}(\sqrt{\sum_{i\in S} {1/ N_i}})$, which has the same order as the gap between the empirical mean of super arm $S$ and its real mean.
Therefore, using independent random samples can % still have correctness guarantee and
behave better than using confidence bounds,
and this is the key idea of Thompson Sampling (TS) \citep{thompson1933likelihood,Kaufmann2012Thompson,agrawal2013further}. %, another classic approach for online learning.
In fact, many prior works show that TS-based algorithms have smaller cumulative regret than UCB-based algorithms in regret minimization of CMAB model \citep{WC18,perrault2020statistical}. %\junz{from the previous sentences on "similar", it's not obvious to draw this conclusion of "better"... Better in what sense?}
However, there still lack studies on adapting the idea of TS, i.e., using random samples to make decisions, to the pure exploration setting.
%However, existing researches only study using TS approaches under the Bayesian setting of  pure exploration \citep{li2021bayesian}, and there still lack results of adapting TS to the frequentist setting.
%
%%Therefore, a natural idea is to use the Thompson Sampling in pure exploration problems. %%

In this paper, we attempt to fill up this gap, and study using random samples in pure exploration under the frequentist setting for both MAB and CMAB instances. % by adapting the idea of TS in the Bayesian setting \citep{li2021bayesian}.
%%, and show that they can also achieve near optimal sample complexities.%%
%%%%%%%%%%%%%%%%%
We emphasize that it is non-trivial to design (and analyze) such a TS-based algorithm. %under the frequentist setting.
The first challenge is that there is a lot more uncertainty in the random samples than in the confidence bounds.
In UCB-based algorithms, the upper confidence bound of the optimal arm is larger than its expected reward with high probability.
Thus, the arm with the largest upper confidence bound is either an insufficiently learned sub-optimal arm (i.e., the number of its observations is not enough to make sure that it is a sub-optimal arm) or the optimal arm, which means that the number of pulls on sufficiently learned sub-optimal arms is limited.
However, for TS-based algorithms that use random samples instead, there is always a constant probability that the random sample of the optimal arm is smaller than its expected reward.
In this case, the arm with the largest random sample may be a sufficiently learned sub-optimal arm. 
Therefore, %we cannot simply adapt a similar idea as UCB-based approach (e.g., only focusing on the arm with the largest random samples).
the mechanism of the TS-based policy should be designed carefully to make sure that we can still obtain an upper bound for the number of pulls on sufficiently learned sub-optimal arms. 

Another challenge is that using random samples to make decisions is a kind of Bayesian algorithm and it loses many good properties in the frequentist setting. 
%
%the frequentist setting is totally different with the Bayesian setting.
%
%As an algorithm that follows the Bayesian framework, TS need more 
%
%
In the Bayesian setting, at each time step $t$, the parameters of the game follow a posterior distribution $\mathcal{P}_t$, and the random samples are drawn from $\mathcal{P}_t$ independently as well. 
Therefore, using the random samples to output the optimal arm can explicitly ensure the correctness of the TS-based algorithm.
However, in the frequentist setting, the parameters of the game are fixed but unknown, and they have no such correlations with the random samples. 
Because of this, if we still choose to use the random samples to output the optimal arm, then the distributions of random samples in the TS-based algorithm need to be chosen carefully to make sure that we can still obtain the correctness guarantee in the frequentist setting.

Besides, the analysis of the TS-based algorithm in pure exploration is also very different from that in regret minimization. 
In regret minimization, at each time step, we only need to draw \emph{one} sample for each arm \cite{thompson1933likelihood,agrawal2013further,WC18,perrault2020statistical}. 
However, in pure exploration, one set of samples at each time step is not enough, since the algorithm needs to i) check whether there is an arm that is the optimal one with high probability; ii) look for an arm that needs exploration.
None of these two goals can be achieved by only \emph{one} set of samples. 
Therefore, we must draw several sets of samples to make decisions, and this may fail the existing analysis of TS in regret minimization.

%Besides, the analysis of the TS-based algorithm in pure exploration is also very different from that in regret minimization. 
%In regret minimization, at each time step, we only need to draw \emph{one} sample for each arm. 
%
%However, in pure exploration, one set of samples at each time step is not enough, since the algorithm needs to i) check whether there is an arm that is the optimal one with high probability; ii) look for an arm that needs exploration, and none of these two goals can be achieved by only \emph{one} set of samples. 
%
%Therefore, we must draw several sets of samples to make decisions, and this may fail existing analysis of TS in regret minimization.
%
%%==This paragraph maybe should be deleted==
%For example, the analysis in \citep{agrawal2013further}
%obtains a lower bound for the probability that the optimal arm has the highest sample in \emph{one} sample set. 
%
%However, when we use several sets of samples to make decisions, we can notice the optimal arm only if it has the highest sample in \emph{most} of these sample sets, which is much harder than it has the highest sample in only \emph{one} sample set when its observations are bad (e.g., its empirical mean equals to its real expected reward minus the confidence radius). 

% The description of TS-based policy

%
In this paper, we solve the above challenges, and design a TS-based algorithm TS-Explore for (combinatorial) pure exploration under the frequentist setting 
%
%TS-Verify takes a target (super) arm as input, and aims to verify that this target (super) arm is optimal with error constraint $\delta$.
%
with polynomial implementation cost.  
At each time step $t$, TS-Explore first draws independent random samples $\theta_i^k(t)$ for all the (base) arms $i\in [m]$ and $k = 1,2,\cdots,M$ (i.e., totally $M$ independent samples for each arm). 
Then it tries to find out the $M$ best (super) arms $\tilde{S}^k_t$'s under sample sets $\bm{\theta}^k(t) = [\theta_1^k(t), \theta_2^k(t), \cdots,\theta_m^k(t)]$ for $k = 1,2,\cdots,M$. 
If in all these sample sets, the best (super) arm is the same as the empirically best (super) arm $\hat{S}_t$ (i.e., the best arm under the empirical means), then the algorithm will output that this (super) arm is optimal. 
Otherwise, for all $k \in [M]$, the algorithm will check the reward gap between $\tilde{S}^k_t$ and $\hat{S}_t$ under parameter set $\bm{\theta}^k(t)$.
Then it focuses on the (super) arm $\tilde{S}^{k_t^*}_t$ with the largest reward gap, i.e., it chooses to pull a (base) arm in the exchange set of $\hat{S}_t$ and $\tilde{S}^{k_t^*}_t$. % at this time step.

Recording such reward gaps and focusing on $\tilde{S}^{k_t^*}_t$ is the key mechanism we used to solve the above challenges. 
On the one hand, our analysis shows that with a proper value $M$ (the number of sample sets) and proper random sample distributions, $\tilde{S}^{k_t^*}_t$ has some similar properties with the (super) arm with the largest upper confidence bound.
These properties play a critical role in the analysis of UCB approaches.
%
%(i.e., the confidence radius of the exchange set of $\hat{S}_t$ and $\tilde{S}^{k_t^*}_t$ is larger than the reward gap between $S^*$ and $\hat{S}$, which looks like that $\tilde{S}^{k_t^*}_t$ has an upper confidence bound larger than the real mean of $S^*$).
%
Thus, we can also use them to obtain an upper bound for the number of pulls on sufficiently learned sub-optimal arms in TS-Explore as well as the correctness guarantee of TS-Explore.
On the other hand, this novel mechanism saves the key advantage of Thompson Sampling, i.e., the sum of random samples within $S$ will not exceed the tight upper confidence bound of $S$ with high probability.
Hence, TS-Explore can achieve a lower complexity upper bound than existing efficient UCB-based solutions.
%
%Moreover, it can be implemented efficiently by adopting a simple offline oracle.  
%
%On the one hand, since TS-Explore has some similar properties with the the UCB-based algorithms, we can use a similar method to obtain its correctness guarantee and an upper bound for the number of pulls on sufficiently learned arms.
%
%On the other hand. since TS-Explore saves the key advantage of Thompson Sampling, it can achieve a lower complexity upper bound than existing efficient
%in TS-Explore, but also gives us the correctness guarantee of TS-Explore.
%This not only allows us to obtain an upper bound for the number of pulls on sufficiently learned arms in TS-Explore, but also gives us the correctness guarantee of TS-Explore.
%
%Moreover, this novel mechanism saves the key advantage of Thompson Sampling (i.e., the sum of random samples within $S$ will not exceed the exact upper confidence bound of $S$ with high probability), and can be implemented efficiently by adopting a simple offline oracle.  
%(but not the sum of upper confidence bounds within $S$), and can be .
%
These results indicate that our TS-Explore policy is correct, efficient (with low implementation cost), and effective (with low complexity).

%In the rest of this paper, we mainly focus on the behaviour (i.e., correctness guarantee and complexity upper bound) of this verification algorithm, which is almost the same as a complete pure exploration algorithm after we adapt the explore-then-verify framework \citep{karnin2016verification,chen2017nearly}.
%
%
%
%
%As for the complete pure exploration algorithm that aims to find out the best (super) arm, we can adapt the explore-then-verify framework \citep{karnin2016verification,chen2017nearly} to achieve 
%
%The reason that we only design a TS-based verification algorithm is that existing works provide a %
%explore-then-verify framework to 
%asymptotically optimal for the MAB case and near optimal for the CMAB case. 
%
%According to \cite{karnin2016verification} and \cite{chen2017nearly}, the explore-then-verify framework is a popular solution for the pure exploration problems, and in this framework, the complexity of the algorithm mainly depends on the verification procedure (which takes an target arm as input and try to verify that whether the target arm is the optimal one).
%
%
%We show that TS-Verify is near optimal and better than existing UCB-based algorithms in the general case \citep{Chen2014Combinatorial}.
%
In the general CMAB setting, we show that TS-Explore is near optimal and achieves a lower complexity upper bound than existing efficient UCB-based algorithms \citep{Chen2014Combinatorial}.
The optimal algorithm in \citep{chen2017nearly} is only efficient when the combinatorial structure satisfies some specific properties (otherwise it suffers from an exponential implementation cost),  
and is less general than our results.
We also conduct experiments to compare the complexity of TS-Explore with existing baselines.
The experimental results show that TS-Explore outperforms the efficient baseline CLUCB \cite{Chen2014Combinatorial}, and behaves only a little worse than the optimal but non-efficient baseline NaiveGapElim \cite{chen2017nearly}. 
%We also conduct experiments to compare the complexities of efficient learning policies, i.e., TS-Explore and CLUCB, %\citep{Chen2014Combinatorial}
% and the experimental results show that TS-Explore outperforms existing benchmark.
%
As for the MAB setting, we show that TS-Explore is asymptotically optimal, i.e., it has a comparable complexity to existing optimal algorithms \citep{kalyanakrishnan2012PAC,kaufmann2013information} when $\delta \to 0$.
All these results indicate that our TS-based algorithm is efficient and effective in dealing with pure exploration problems.
%
%The optimal algorithm \citep{chen2017nearly} requires the combinatorial structure to satisfy some specific properties so that they can adopt an efficient and more powerful offline oracle, which is less general than our TS-based algorithm.
%
%We show that our TS-Verify is asymptotically optimal for the MAB case, i.e., it has a comparable complexity as existing optimal algorithms \citep{kalyanakrishnan2012PAC,kaufmann2013information} when $\delta \to 0$.
%
%As for the CMAB case, it is near optimal and better than existing UCB-based algorithms in the general case \citep{Chen2014Combinatorial}.
%
%The optimal algorithm \citep{chen2017nearly} requires the combinatorial structure to satisfy some specific properties so that they can adopt an efficient and more powerful offline oracle, which is less general than our TS-based algorithm.
%
%To the best of our knowledge, this is the first result of using TS-based algorithms in pure exploration under the frequentist setting.
To the best of our knowledge, this is the first result of using this kind of TS-based algorithm (i.e., always making decisions based on random samples) in pure exploration under the frequentist setting.

\section{Related Works}

% Classic Pure exploration problem and some UCB-based solutions.

Pure exploration of the classic MAB model is first proposed by \citet{Audibert2010Best}.
After that, people have designed lots of learning policies for this problem. 
The two most representative algorithms are successive-elimination \citep{even2006action,Audibert2010Best,kaufmann2013information} and LUCB \citep{kalyanakrishnan2012PAC,kaufmann2013information}.
Both of them adopt the idea of UCB \citep{Auer2002Finite} and achieve an asymptotically optimal complexity upper bound (i.e., it matches with the complexity lower bound proposed by \citet{kalyanakrishnan2012PAC}). 
Compared to these results, our TS-Explore policy uses a totally different approach, and can achieve an asymptotically optimal complexity upper bound as well.

%%%%%%%%%%%%%%%%%

% Combinatorial Pure exploration and some UCB-based solutions, also compare our results with them.

Combinatorial pure exploration is first studied by \citet{Chen2014Combinatorial}. 
They propose CLUCB, an LUCB-based algorithm that is efficient as long as there is an offline oracle to output the best super arm under any given parameter set. 
%
%%They also prove that the lower bound of sample complexity is xxx. %%
%
\citet{chen2017nearly} then design an asymptotically optimal algorithm for this problem. However, their algorithm can 
only be implemented efficiently when the combinatorial structure follows some specific constraints.
%they require the combinatorial structure to follow some specific constraints so that the algorithm can be implemented efficiently.
%
%%a tighter complexity lower bound xxx, and an algorithm with matching complexity upper bound under the constraint that the combinatorial structure follow some specific constraints. %%
%
%Compare to these solutions, our algorithm does not need the special assumption on combinatorial structure, and can achieve better complexity bound than CLUCB. 
%In fact, we show that the complexity upper bound of our algorithm is very close to the lower bound. 
Recently, based on the game approach, \citet{jourdan2021efficient} provide another optimal learning policy for pure exploration in CMAB. But their algorithm still suffers from an exponential implementation cost.
%requires the existence of a more powerful offline oracle.
%
Compared with these UCB-based algorithms, our TS-Explore policy achieves a lower complexity bound than CLUCB \citep{Chen2014Combinatorial} (with a similar polynomial time cost), and  has a much lower implementation cost than the optimal policies \cite{chen2017nearly,jourdan2021efficient} in the most general combinatorial pure exploration setting. % (we conjecture that this is one of the key reasons that TS-Explore does not achieve an optimal complexity bound, please see detailed discussions about this in Appendix XXX).
%
%On the other hand, its implementation cost is always polynomial for any combinatorial structures.  
%
%However, since we do not assume the existence of a more powerful offline oracle, we cannot achieve the optimal complexity upper bound.

There also exist some researches about applying Thompson Sampling to pure exploration. For example, \citet{russo2016simple} considers a frequentist setting of pure exploration in classic MAB, and proposes algorithms called TTPS, TTVS, and TTTS; \citet{shang2020fixed} extend the results in \cite{russo2016simple}, design a T3C algorithm, and provide its analysis for Gaussian bandits; \citet{li2021bayesian} study Bayesian contextual pure exploration, propose an algorithm called BRE and obtain its corresponding analysis. However, these results are still very different from ours. The first point is that they mainly use random distributions but not random samples to %make decisions, %
decide the next chosen arm or when to stop, %
%
%For example, when using Gaussian random distributions, their algorithms make decisions based on the mean vectors and covariance matrices.
%
which may cause a high implementation cost when we extend them to the combinatorial setting, since it is much more complex to deal with random distributions than to deal with random samples. 
%
%For example, our solution only needs to deal with $m$ random samples at the same time, while their solutions of using a multivariate Gaussian distribution (the simplest case) need to deal with the mean vector as well as the covariance matrix, which contains $m + {m(m+1)\over2}$ parameters.
%
%
%need to deal with a set of random distributions (even if we consider the simlest case, i.e., to use a multi-variate Gaussian distribution, we need to deal with its mean vector as well as its covariance matrix, which contains $m + m(m+1)/2$ parameters).
%Using random samples is crucial for extending the results to the combinatorial case, since dealing with a set of random samples (real numbers) is much easier than dealing with a set of random distributions. In fact, there usually exists efficient oracles to output the best super arm under a set of random samples, but it does not work for the case that we are given a set of random distributions.
%
Moreover, our results are still more general even if we only consider pure exploration in classic MAB under the frequentist setting.
For example, \citet{russo2016simple} does not provide a correct stopping rule, and \citet{shang2020fixed} only obtain the correctness guarantee for Gaussian bandits. Besides, their complexity bounds are asymptotic ones (which require $\delta \to 0$), while ours works for any $\delta \in (0,1)$.

\section{Model Setting}

\subsection{Pure Exploration in Multi-armed Bandit}

% Description of the classic pure exploration problem in MAB model.

A pure exploration problem instance of MAB is a tuple $([m], \bm{D}, \delta)$. 
Here $[m] = \{1,2,\cdots,m\}$ is the set of arms, $\bm{D} = \{D_1, D_2, \cdots, D_m\}$ are the corresponding reward distributions of the arms, and $\delta$ is the error constraint. 
In this paper, we assume that all the distributions $D_i$'s are supported on $[0,1]$. 
Let $\mu_i \triangleq \E_{X \sim D_i}[X]$ denote the expected reward of arm $i$, and $a^* = \argmax_{i\in [m]} \mu_i$ is the optimal arm with the largest expected reward.
Similar to many existing works (e.g., \citep{Audibert2010Best}), we assume that the optimal arm is unique.
At each time step $t$, the learning policy $\pi$ can either pull an arm $i(t) \in [m]$, or output an arm $a(t) \in [m]$. 
If it chooses to pull arm $i(t)$, then it will receive an observation $r_{i(t)}(t)$, which is drawn independently from $D_{i(t)}$. 
The goal of the learning policy is to make sure that with probability at least $1-\delta$, its output $a(t) = a^*$. 
Under this constraint, it aims to minimize the complexity %$ Z^{\pi} \triangleq \sum_{i=1}^m N_i(T^{\pi})$,
\begin{equation*}
     Z^{\pi} \triangleq \sum_{i=1}^m N_i(T^{\pi}),
\end{equation*}
where $T^{\pi}$ denotes the time step $t$ that policy $\pi$ chooses to output $a(t)$, and $N_i(t)$ denotes the number of observations on arm $i$ until time step $t$.

Let $\Delta_{i,m} \triangleq \mu_{a^*} - \mu_i$ denote the expected reward gap between the optimal arm $a^*$ and any other arm $i\ne a^*$. 
For the optimal arm $a^*$, its $\Delta_{a^*,m}$ is defined as $\mu_{a^*} - \max_{i\ne a^*} \mu_i$.
We also define $H_{m} \triangleq \sum_{i\in [m]} {1\over \Delta_{i,m}^2}$, and existing works \citep{kalyanakrishnan2012PAC} show that the complexity lower bound of any pure exploration algorithm is $\Omega(H_{m} \log{1\over \delta})$.

%%%%%%%%%%%%%%%%%

\subsection{Pure Exploration in Combinatorial Multi-armed Bandit}\label{Section_Model_CMAB}

% Description of the classic pure exploration problem in CMAB model.

A pure exploration problem instance of CMAB is an extension of the MAB setting. 
%
%%Except for the tuple $([m], \mathcal{D}, \delta)$,%% 
The arms $i\in [m]$ are called base arms, and there is also a super arm set $\mathcal{I} \subseteq 2^{[m]}$.
For each super arm $S \in \mathcal{I}$, its expected reward is $\sum_{i\in S} \mu_i$.
Let $S^* = \argmax_{S\in \mathcal{I}} \sum_{i\in S} \mu_i$ denote the optimal super arm with the largest expected reward, and we assume that the optimal super arm is unique (as in \citep{Chen2014Combinatorial}).
At each time step $t$, the learning policy $\pi$ can either pull a base arm $i(t) \in [m]$, or output a super arm $S(t) \in \mathcal{I}$. 
%
%%If it choose to pull arm $i(t)$, it will then receive an observation $r_{i(t)}(t)$, which is independently drawn from distribution $D_i$. %%
%
The goal of the learning policy is to make sure that with probability at least $1-\delta$, its output $S(t) = S^*$. Under this constraint, it also wants to minimize its complexity $Z^{\pi}$.

As in many existing works \citep{Chen2013Combinatorial,Chen2014Combinatorial,WC18}, we also assume that there exists an offline ${\sf Oracle}$, which takes a set of parameters $\bm{\theta} = [\theta_1, \cdots, \theta_m]$ as input, and outputs the best super arm under this parameter set, i.e., ${\sf Oracle}(\bm{\theta}) = \argmax_{S\in \mathcal{I}} \sum_{i\in S}\theta_i$.

In this paper, for $i \notin S^*$, we use $\Delta_{i,c} \triangleq \sum_{j\in S^*} \mu_j - \max_{S \in \mathcal{I}: i\in S} \sum_{j\in S} \mu_j$ to denote the expected reward gap between the optimal super arm $S^*$ and the best super arm that contains $i$. 
As for $i \in S^*$, its $\Delta_{i,c}$ is defined as $\Delta_{i,c} \triangleq \sum_{j\in S^*} \mu_j - \max_{S \in \mathcal{I} : i\notin S} \sum_{j\in S} \mu_j$, i.e., the expected reward gap between $S^*$ and the best super arm that does not contain $i$.
We also define $S \oplus S' = (S\setminus S') \cup (S'\setminus S)$ and $\width \triangleq \max_{S\ne S'} |S \oplus S'|$, and let $H_{1,c} \triangleq \width \sum_{i\in [m]} {1\over \Delta_{i,c}^2}$, $H_{2,c} \triangleq \width^2 \sum_{i\in [m]} {1\over \Delta_{i,c}^2}$.

\citet{chen2017nearly} prove that the complexity lower bound for combinatorial pure exploration is $\Omega(H_{0,c}\log{1\over \delta})$, where $H_{0,c}$ is the optimal value of the following convex program (here $\Delta_{S^*,S} = \sum_{i\in S^*} \mu_i - \sum_{i\in S} \mu_i$):
\begin{eqnarray*}
 \min && \sum_{i\in [m]} N_i\\
 \mathrm{s.t.} && \sum_{i\in S^* \oplus S} {1\over N_i} \le \Delta_{S^*, S}^2, \quad\forall S \in \mathcal{I}, S \ne S^*
 %+ \sum_{i\in S^* \setminus S} {1\over N_i} 
\end{eqnarray*}

The following result shows the relationships between $H_{0,c}$, $H_{1,c}$ and $H_{2,c}$.
\begin{proposition}\label{Theorem_basic}
For any combinatorial pure exploration instance, $H_{0,c} \le H_{1,c} \le H_{2,c}$.
\end{proposition}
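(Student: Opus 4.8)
The plan is to handle the two inequalities separately, with essentially all the work in the first. The inequality $H_{1,c}\le H_{2,c}$ is immediate: since there exist at least two distinct super arms (as $S^*$ is assumed unique), any such pair has nonempty symmetric difference, so $\width=\max_{S\ne S'}|S\oplus S'|\ge 1$ and hence $\width\le\width^2$; multiplying through by the common positive factor $\sum_{i\in[m]}1/\Delta_{i,c}^2$ gives the claim.

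For $H_{0,c}\le H_{1,c}$, recall that $H_{0,c}$ is the minimum of the convex program, so it suffices to exhibit one feasible point $\{N_i\}$ whose objective satisfies $\sum_i N_i\le H_{1,c}$. The natural candidate is $N_i=\width/\Delta_{i,c}^2$, which achieves $\sum_i N_i=\width\sum_i 1/\Delta_{i,c}^2=H_{1,c}$ with equality; everything then reduces to verifying that this choice meets every constraint.

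The key step, which I expect to carry the whole argument, is the pointwise comparison: for every $S\ne S^*$ and every $i\in S^*\oplus S$, one has $\Delta_{i,c}\le\Delta_{S^*,S}$. I would prove it by unfolding the definition of $\Delta_{i,c}$ in two cases. If $i\in S\setminus S^*$, then $S$ is a super arm containing $i$, so the maximum over super arms containing $i$ in the definition of $\Delta_{i,c}$ is at least $\sum_{j\in S}\mu_j$, yielding $\Delta_{i,c}\le\sum_{j\in S^*}\mu_j-\sum_{j\in S}\mu_j=\Delta_{S^*,S}$. If instead $i\in S^*\setminus S$, then $S$ is a super arm not containing $i$, and the same reasoning applied to the maximum over super arms not containing $i$ gives the identical bound.

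Granting this comparison, feasibility follows by a simple counting bound. Fixing any constraint $S\ne S^*$ and substituting $1/N_i=\Delta_{i,c}^2/\width$, I would apply $\Delta_{i,c}^2\le\Delta_{S^*,S}^2$ termwise together with $|S^*\oplus S|\le\width$:
$$\sum_{i\in S^*\oplus S}\frac{1}{N_i}=\frac{1}{\width}\sum_{i\in S^*\oplus S}\Delta_{i,c}^2\le\frac{|S^*\oplus S|}{\width}\,\Delta_{S^*,S}^2\le\Delta_{S^*,S}^2.$$
Hence the candidate is feasible, so $H_{0,c}\le\sum_i N_i=H_{1,c}$. The trivial first inequality and this final counting step are routine; the care is concentrated entirely in the two-case comparison $\Delta_{i,c}\le\Delta_{S^*,S}$, which is where I would spend my effort.
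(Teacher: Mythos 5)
Your proposal is correct and follows essentially the same route as the paper: the bound $H_{1,c}\le H_{2,c}$ from $\width\ge 1$, and $H_{0,c}\le H_{1,c}$ by exhibiting $N_i=\width/\Delta_{i,c}^2$ as a feasible point of the convex program. The only difference is that the paper simply asserts feasibility, whereas you verify it explicitly via the comparison $\Delta_{i,c}\le\Delta_{S^*,S}$ for $i\in S^*\oplus S$ together with $|S^*\oplus S|\le\width$ — a worthwhile elaboration, but the same argument.
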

\begin{proof}
Since $\width \ge 1$, we have that $H_{1,c} \le H_{2,c}$. 
As for the first inequality, note that $\forall i \in [m]$, $N_i = {\width \over \Delta_{i,c}^2}$ is a feasible solution of the above convex program. Hence we have that $H_{0,c} \le H_{1,c}$.
\end{proof}

\section{Thompson Sampling-based Pure Exploration Algorithm}

Note that pure exploration of classic multi-armed bandit is a special case of combinatorial multi-armed bandit (i.e., $\mathcal{I} = \{\{1\}, \{2\}, \cdots, \{m\}\}$). 
Therefore, in this section, we mainly focus on the TS-based pure exploration algorithm in the combinatorial multi-armed bandit setting. Its framework and analysis can directly lead to results in the classic multi-armed bandit setting.

% Why we focus on the verification but not exploration? 
In the following, we let $\Phi(x,\mu,\sigma^2) \triangleq \Pr_{X \sim \mathcal{N}(\mu,\sigma^2)}[X \ge x]$. % where $\mathcal{N}(\mu,\sigma^2)$ denotes the Gaussian distribution with mean $\mu$ and variance $\sigma^2$,
For any $x \in (0,0.5)$, we also define $\phi(x)$ as a function of $x$ such that $\Phi(\phi(x),0,1) = x$.
%

%%%%%%%%%%%%%%%%%

\subsection{Algorithm Framework}

\begin{algorithm}[t]
    \centering
    \caption{TS-Explore}\label{Algorithm_TSE}
    \begin{algorithmic}[1]
    \STATE \textbf{Input: } Error constraint $\delta$, $q \in [\delta, 0.1]$, $t \gets m$, $N_i \gets 0, R_i \gets 0$ for all $i\in [m]$.
    \STATE Pull each arm once, update their number of pulls $N_i$'s, the sum of their observations $R_i$'s. 
    \WHILE {\textbf{true}}
    \STATE $t \gets t+1$.
    \STATE For all base arm $i\in [m]$, $\hat{\mu}_i(t) \gets {R_i \over N_i}$.
    \STATE $\bm{\hat{\mu}}(t) \gets [\hat{\mu}_1(t), \hat{\mu}_2(t), \cdots, \hat{\mu}_m(t)]$.
    \STATE $\hat{S}_t \gets {\sf Oracle}(\bm{\hat{\mu}}(t))$.
    \FOR {$k=1,2,\cdots, M(\delta,q,t)$}
    \STATE For each arm $i$, draw $\theta_i^k(t)$ independently from distribution $\mathcal{N}(\hat{\mu}_i(t), {C(\delta,q,t)\over N_i})$.
    \STATE $\bm{\theta}^k(t) \gets [\theta_1^k(t), \theta_2^k(t), \cdots, \theta_m^k(t)]$.
    \STATE $\tilde{S}^k_t \gets {\sf Oracle}(\bm{\theta}^k(t))$.
    \STATE $\tilde{\Delta}^k_t \gets \sum_{i\in \tilde{S}^k_t} \theta_i^k(t) -  \sum_{i\in \hat{S}_t} \theta_i^k(t)$.
    \ENDFOR
    \IF {$\forall 1 \le k \le M(\delta,q,t)$, $\tilde{S}^k_t = \hat{S}_t$}
    \STATE \textbf{Return: } $\hat{S}_t$.
    \ELSE
    \STATE $k^*_t \gets \argmax_k \tilde{\Delta}^k_t$, $\tilde{S}_t \gets \tilde{S}^{k^*_t}_t$.
    \STATE Pull arm $i(t) \gets \argmin_{i\in \hat{S}_t \oplus \tilde{S}_t} N_i$, update its number of pulls $N_{i(t)}$ and the sum of its observations $R_{i(t)}$. %and observe $r_{i(t)}(t)$.
    %\STATE $R_{i(t)} \gets R_{i(t)} + r_{i(t)}(t)$, $N_{i(t)} \gets N_{i(t)} + 1$.
    \ENDIF
    \ENDWHILE
    \end{algorithmic}
\end{algorithm}

% The description of algorithm framework.
Our Thompson Sampling-based algorithm (TS-Explore) is described in Algorithm \ref{Algorithm_TSE}.
We use $N_i$ to denote the number of observations on arm $i$, $R_i$ to denote the sum of all the observations from arm $i$, and $N_i(t), R_i(t)$ to denote the value of $N_i, R_i$ at the beginning of time step $t$. 

%The input $q$ is used to control the number of times to draw random samples in our algorithm, and we only need that $q \ge \delta$. 
%

At each time step $t$, for any $i\in[m]$, TS-Explore first draws $M(\delta,q,t) \triangleq {1\over q}\log (12|\mathcal{I}|^2t^2/\delta)$ random samples $\{\theta_i^k(t)\}_{k=1}^{M(\delta,q,t)}$ independently from a Gaussian distribution with mean $\hat{\mu}_i(t) \triangleq {R_i(t) \over N_i(t)}$ (i.e., the empirical mean of arm $i$) and variance ${C(\delta,q,t)\over N_i(t)}$ (i.e., inversely proportional to $N_i(t)$), where $C(\delta,q,t) \triangleq {\log (12|\mathcal{I}|^2t^2/\delta) \over \phi^2(q)}$. % is a term depending on the settings and will be explained in detail later.
Then it checks which super arm is optimal in the empirical mean parameter set $\bm{\hat{\mu}}(t) = [\hat{\mu}_1(t), \hat{\mu}_2(t), \cdots, \hat{\mu}_m(t)]$ and the $k$-th sample set $\bm{\theta}^k(t) = [\theta_1^k(t), \theta_2^k(t), \cdots, \theta_m^k(t)]$ for all $k$ by using the offline ${\sf Oracle}$,
i.e., $\hat{S}_t = {\sf Oracle}(\bm{\hat{\mu}}(t))$ and $\tilde{S}^k_t = {\sf Oracle}(\bm{\theta}^k(t))$.
%
%%the same as the empirical mean of arm $i$ and variance . For each arm $i$, the mean of the Gaussian distribution is its empirical mean $\hat{\mu}_i(t) = {R_i(t) \over N_i(t)}$ (here $N_i(t)$ represents the number of observations on arm $i$, and $R_i(t)$ represents the sum of all observations from arm $i$), and the variance is ${C(\delta,q,t)\over N_i(t)}$, where $C(\delta,q,t)$ is an input of the algorithm and needs to be different for different settings. 
%We call $[\theta_1^k(t), \cdots, \theta_m^k(t)]$ the $k$-th sample set in time step $t$. %%
%
%Then TS-Verify checks which super arm is optimal in 
%
%In the MAB case, it can directly check which arm's sample is the largest one; in the CMAB case, it can use ${\sf Oracle}$ to obtain the best super arm under the $k$-th sample set. 
%
%
If all the best super arms $\tilde{S}^k_t$'s are the same as $\hat{S}_t$, then TS-Explore outputs that this super arm is the optimal one. 
Otherwise, for all $1 \le k \le M(\delta,q,t)$, it will compute the reward gap between $\tilde{S}^k_t$ and $\hat{S}_t$ under the $k$-th sample set $\bm{\theta}^k(t)$, and focus on $k^*_t$ with the largest reward gap, i.e., TS-Explore will choose to pull the base arm with the least number of observations in $\hat{S}_t \oplus \tilde{S}^{k^*_t}_t$ (in the following, we use $\tilde{S}_t$ to represent $\tilde{S}^{k^*_t}_t$ to simplify notations). %the super arms $\hat{S}_t$ and $\tilde{S}^{k^*_t}_t$ require more explorations. 
%
%hether the number of sample sets in which the target arm $\tilde{a}$ (or target super arm $\tilde{S}$) is the optimal one is large enough. 
%
%If the number of such sample sets is larger than $(1-q)M(\delta,q,t)$, then TS-Verify outputs that the target (super) arm is optimal.
%
%Otherwise it knows that either the target (super) arm is not optimal, or some (super) arms are not sufficiently learned (i.e., they do not have enough observations).
%
%In this case, TS-Explore will choose to pull the base arm in arm set $\hat{S}_t \oplus \tilde{S}^{k^*_t}_t$ with the least number of observations. 
%
Note that $\hat{S}_t \ne \tilde{S}_t$ (otherwise we will output $\hat{S}_t$ as the optimal super arm), thus the rule of choosing arms to pull is well defined. 

\subsection{Analysis of TS-Explore}\label{Section_TSPEMAB}

\begin{theorem}\label{Theorem_Explore}
In the CMAB setting, for $q \in [\delta, 0.1]$, with probability at least $1-\delta$, TS-Explore will output the optimal super arm $S^*$ with complexity upper bounded by $O(H_{1,c}(\log{1\over \delta} + \log(|\mathcal{I}|H_{1,c})){\log{1\over \delta} \over \log{1\over q}} + H_{1,c}{\log^2(|\mathcal{I}|H_{1,c}) \over \log{1\over q}})$.
Specifically, if we choose $q = \delta$, then the complexity upper bound is $O(H_{1,c}\log{1\over \delta} + H_{1,c}\log^2(|\mathcal{I}|H_{1,c}))$.
\end{theorem}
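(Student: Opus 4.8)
The plan is to prove the two assertions --- that the output equals $S^*$ and that the complexity obeys the stated bound --- on a single high-probability event, after splitting the budget $\delta$ across a few concentration events. Pure-exploration MAB being the special case $\mathcal{I}=\{\{1\},\dots,\{m\}\}$ with $\width=1$, I will argue directly in the CMAB setting.

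For correctness I would first introduce the \emph{tight} confidence event $\mathcal{G}_1$ on which, for every $t$ and every $S$, the empirical super-arm gap $\sum_{i\in S}\hat\mu_i(t)-\sum_{i\in S^*}\hat\mu_i(t)$ deviates from its mean $\sum_{i\in S}\mu_i-\sum_{i\in S^*}\mu_i$ by at most $\sqrt{b_t\sum_{i\in S^*\oplus S}1/N_i(t)}$ with $b_t=\log(12|\mathcal{I}|^2t^2/\delta)$; this follows from Hoeffding on the independent $[0,1]$ rewards, a union bound over $S\in\mathcal{I}$ and over the admissible counts, and the summable $t^{-2}$ factor, at cost $O(\delta)$. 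The crux is the one-step claim: on $\mathcal{G}_1$, if $\hat S_t\ne S^*$ then a \emph{single} sample set gives $\tilde S^k_t\ne\hat S_t$ with probability at least $q$. Indeed $\sum_{i\in S^*}\theta^k_i(t)>\sum_{i\in\hat S_t}\theta^k_i(t)$ already forces $\tilde S^k_t\ne\hat S_t$, and this difference is Gaussian with non-positive mean $m_0=\sum_{S^*}\hat\mu-\sum_{\hat S_t}\hat\mu$ and variance $C(\delta,q,t)\sum_{S^*\oplus\hat S_t}1/N_i$; since $\hat S_t$ is empirically best the true gap is absorbed and $\mathcal{G}_1$ gives $|m_0|\le\sqrt{b_t\sum_{S^*\oplus\hat S_t}1/N_i}$, so the standardized threshold is $|m_0|/\sigma\le\sqrt{b_t/C}=\phi(q)$, whence the probability is at least $\Phi(\phi(q),0,1)=q$ by the choices of $C$ and $\phi$. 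Thus all $M(\delta,q,t)$ independent sample sets agree with a wrong $\hat S_t$ with probability at most $(1-q)^{M}\le e^{-qM}=\delta/(12|\mathcal{I}|^2t^2)$; summing over $t$ keeps the total probability of ever outputting a wrong arm at $O(\delta)$. This is exactly where $M=\tfrac1q\log(12|\mathcal{I}|^2t^2/\delta)$ and $C=\log(12|\mathcal{I}|^2t^2/\delta)/\phi^2(q)$ are used, and it realizes the abstract's point that the \emph{sum} of independent samples obeys the tight square-root confidence scale.

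For the complexity I would add an event $\mathcal{G}_2$ bounding the upward deviation of the samples: for all $t$, all $k\le M(\delta,q,t)$ and all $S$, the sampled gap exceeds the empirical gap by at most $\sqrt{2C(\delta,q,t)\,\ell_t\sum_{i\in S\oplus\hat S_t}1/N_i(t)}$ with $\ell_t\asymp\log(M|\mathcal{I}|t^2/\delta)$, holding up to $O(\delta)$ by Gaussian tails and a union bound. The core is a per-arm pull bound. Fix a base arm $i$ and the last step $t$ at which it is pulled; then $i=\argmin_{j\in\hat S_t\oplus\tilde S_t}N_j(t)$, so $\sum_{j\in\hat S_t\oplus\tilde S_t}1/N_j(t)\le\width/N_i(t)$. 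In the representative case $\hat S_t=S^*$, decomposing $\tilde\Delta_t=(\sum_{\tilde S_t}\hat\mu-\sum_{S^*}\hat\mu)+(\text{overshoot})$ and bounding the empirical term by $\mathcal{G}_1$ and the overshoot by $\mathcal{G}_2$ gives $0<\tilde\Delta_t\le-\Delta_{S^*,\tilde S_t}+(\sqrt{b_t}+\sqrt{2C\ell_t})\sqrt{\width/N_i(t)}$, so $\Delta_{S^*,\tilde S_t}\le(\sqrt{b_t}+\sqrt{2C\ell_t})\sqrt{\width/N_i(t)}$. Since $\Delta_{i,c}\le\Delta_{S^*,\tilde S_t}$ for $i\in S^*\oplus\tilde S_t$ (by the definition of $\Delta_{i,c}$), this forces $N_i(t)=O\bigl(\width(b_t+C\ell_t)/\Delta_{i,c}^2\bigr)$. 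Summing over $i$ with $\width\sum_i1/\Delta_{i,c}^2=H_{1,c}$ and substituting $b_t,\ell_t\asymp\log\tfrac1\delta+\log(|\mathcal{I}|H_{1,c})$ and $C\ell_t\asymp(\log\tfrac1\delta+\log(|\mathcal{I}|H_{1,c}))^2/\log\tfrac1q$ yields $Z=O\bigl(H_{1,c}(\log\tfrac1\delta+\log(|\mathcal{I}|H_{1,c}))^2/\log\tfrac1q\bigr)$, which equals the stated two-term bound after expanding the square; $q=\delta$ gives the simplified form.

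I expect two main obstacles. First, the per-arm bound must also handle $\hat S_t\ne S^*$, where the pulled arm $i\in\hat S_t\oplus\tilde S_t$ need not lie in $S^*\oplus\tilde S_t$; I would split on whether $i$ separates $S^*$ from $\tilde S_t$ or from $\hat S_t$, controlling $\Delta_{S^*,\tilde S_t}$ by optimism ($\mathcal{G}_2$) in the former and $\Delta_{S^*,\hat S_t}$ by $\mathcal{G}_1$ together with the fact that $\hat S_t$ is empirically preferred to $S^*$ in the latter, so that $\Delta_{i,c}$ is controlled either way. Second, the bound on $N_i(t)$ depends on $t$ through $b_t,\ell_t$ while $t\le Z=\sum_iN_i$; this circularity I would close by a self-bounding argument, using $\log t=O(\log H_{1,c}+\log\log(\cdot))$ at the stopping time so that the $\log t$ contributions are absorbed into $\log(|\mathcal{I}|H_{1,c})$. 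The remaining steps --- partitioning $\delta$ among $\mathcal{G}_1$, $\mathcal{G}_2$ and the wrong-output event, and the Gaussian-quantile estimate $\phi^2(q)=\Theta(\log\tfrac1q)$ that produces the $1/\log\tfrac1q$ factor --- are routine.
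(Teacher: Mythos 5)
Your correctness argument and your treatment of the case $\hat{S}_t = S^*$ are essentially the paper's: your $\mathcal{G}_1,\mathcal{G}_2$ are its events $\mathcal{E}_{0,c},\mathcal{E}_{1,c}$, your one-step claim with the $(1-q)^{M}$ computation is exactly how it controls the stopping rule, and your self-bounding step is its Appendix~\ref{Appendix_B}. The genuine gap is in the complexity analysis for $\hat{S}_t \ne S^*$, i.e.\ precisely the cases you defer to your ``first obstacle,'' and neither device you propose there works. The root problem: the pulled arm $i=\argmin_{j\in \hat{S}_t\oplus\tilde{S}_t}N_j(t)$ controls pull counts \emph{only} on $\hat{S}_t\oplus\tilde{S}_t$, while both of your sub-cases need concentration on sets involving $S^*$. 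In your sub-case where $i\in S^*\oplus\hat{S}_t$, combining $\mathcal{G}_1$ with ``$\hat{S}_t$ is empirically preferred to $S^*$'' yields only $\Delta_{S^*,\hat{S}_t}\le\sqrt{b_t\sum_{j\in S^*\oplus\hat{S}_t}1/N_j(t)}$; arms in $S^*\setminus(\hat{S}_t\cup\tilde{S}_t)$ may have been pulled once, making this radius huge, so the inequality is vacuous and yields no contradiction with $N_i(t)$ being large. In your sub-case where $i\in S^*\oplus\tilde{S}_t$, you invoke ``optimism ($\mathcal{G}_2$),'' but $\mathcal{G}_2$ as you defined it is an upper-deviation (concentration) event: it can only upper-bound the sampled gap $\tilde{\Delta}_t$, never lower-bound it by $\Delta_{S^*,\tilde{S}_t}$, and again $S^*\oplus\tilde{S}_t$ contains arms whose counts the argmin rule does not control.

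What is missing is the paper's third event $\mathcal{E}_{2,c}$ together with the selection rule $k^*_t=\argmax_k\tilde{\Delta}^k_t$. The event $\mathcal{E}_{2,c}$ asserts that for every $t$ and every pair $S,S'$, at least one of the $M$ sample sets satisfies $\sum_{i\in S}\theta^k_i(t)-\sum_{i\in S'}\theta^k_i(t)\ge\Delta_{S,S'}$; it holds with probability $1-O(\delta)$ by the same $(1-q)^{M}$ computation you already did for the stopping rule, but applied uniformly over pairs (conditioned on $\mathcal{E}_{0,c}$), and---crucially---it requires no lower bound on any $N_j$. This is the device that transfers gap information from pairs involving $S^*$ (uncontrolled counts) to the pair $(\tilde{S}_t,\hat{S}_t)$ (controlled counts). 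For instance, if $i\in S^*\oplus\hat{S}_t$, then $\mathcal{E}_{2,c}$ gives some $k$ with $\tilde{\Delta}^k_t\ge\Delta_{S^*,\hat{S}_t}$, hence $\tilde{\Delta}^{k^*_t}_t\ge\Delta_{S^*,\hat{S}_t}\ge\Delta_{i,c}$ by the argmax rule; then sample concentration on $\hat{S}_t\oplus\tilde{S}_t$ (where the argmin rule does apply) converts this into a strictly positive \emph{empirical} gap in favor of $\tilde{S}_t$, contradicting that $\hat{S}_t={\sf Oracle}(\bm{\hat{\mu}}(t))$. The case $i\in S^*\oplus\tilde{S}_t$ with $\hat{S}_t\ne S^*$ is handled analogously by applying $\mathcal{E}_{2,c}$ to the pair $(S^*,\tilde{S}_t)$ and comparing $\tilde{\Delta}^{k'}_t$ with $\tilde{\Delta}^{k^*_t}_t$. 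Without this event and the argmax-over-$k$ mechanism (which the paper presents as its key idea), your per-arm pull bound is established only when $\hat{S}_t=S^*$, and the stated complexity bound does not follow.
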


\begin{remark}
When the error constraint $\delta \in (0.1, 1)$, we can still let the parameters $(\delta,q)$ in TS-Explore be $(\delta_0,q_0) = (0.1, 0.1)$.
In this case: i) its error probability is upper bounded by $\delta_0 = 0.1 < \delta$; 
and ii) since ${\delta\over \delta_0} < {1\over 0.1} = 10$, the complexity of TS-Explore is still upper bounded by $O(H_{1,c}\log{1\over \delta_0} + H_{1,c}\log^2(|\mathcal{I}|H_{1,c})) = O(H_{1,c}\log{1\over \delta} + H_{1,c}\log^2(|\mathcal{I}|H_{1,c}))$. 
\end{remark}

%We first apply TS-Verify to the classic MAB case. 
%
% The description of algorithm framework.
%
%In the following, we define $\Phi(x,\mu,\sigma^2) \triangleq \Pr_{X \sim \mathcal{N}(\mu,\sigma^2)}[X \ge x]$, % where $\mathcal{N}(\mu,\sigma^2)$ denotes the Gaussian distribution with mean $\mu$ and variance $\sigma^2$,
%and $\phi(x)$ satisfies that $\Phi(\phi(x),0,1) = x$ for all $x \in (0,1)$.

%In the MAB case, we set $C(\delta,q,t) = {4\log (12mt^2/\delta) \over \phi^2(3q)}$, and for the exploration set $\{a', \tilde{a}\}$, we will choose to pull the arm with a smaller number of observations.
%
%%the function $G(a,a')$ will return the arm $\argmin \{N_a, N_{a'}\}$.%%
%
%The following theorem shows that our TS-Verify framework is asymptotically optimal in this case.

%%%%%%%%%%%%%%%%%

% The correctness and complexity bound.
%\begin{theorem}\label{Theorem_1}
%For $q \in [\delta, 0.1)$, with probability at least $1-\delta$, we have that: 
%i) if $\tilde{a} = a^*$, TS-Verify will return that arm $\tilde{a}$ is the optimal arm with complexity upper bounded by $O(H_m(\log{1\over \delta} + \log(mH_m)){\log{1\over \delta} \over \log{1\over q}} + H_m{\log^2(mH_m) \over \log{1\over q}})$;
%ii) if $\tilde{a} \ne a^*$, TS-Verify will not stop.
%Specifically, if we choose $q = \delta$, then the complexity upper bound is $O(H_m\log{1\over \delta} + H_m\log^2(mH_m))$.
%\end{theorem}
By Theorem \ref{Theorem_Explore}, we can directly obtain the correctness guarantee and the complexity upper bound for applying TS-Explore in pure exploration of classic multi-armed bandit. %, which is stated in the following corollary.

\begin{corollary}\label{Corollary_Complexity}
In the MAB setting, for $q \in [\delta, 0.1]$, with probability at least $1-\delta$, TS-Explore will output the optimal arm $a^*$ with complexity upper bounded by $O(H_{m}(\log{1\over \delta} + \log(mH_{m})){\log{1\over \delta} \over \log{1\over q}} + H_{m}{\log^2(mH_m) \over \log{1\over q}})$.
Specifically, if we choose $q = \delta$, then the complexity upper bound is $O(H_{m}\log{1\over \delta} + H_{m}\log^2(mH_{m}))$.
\end{corollary}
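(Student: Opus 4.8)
The plan is to derive Corollary~\ref{Corollary_Complexity} directly from Theorem~\ref{Theorem_Explore} by recognizing the classic MAB problem as the special case of CMAB in which the super-arm collection is $\mathcal{I} = \{\{1\},\{2\},\dots,\{m\}\}$. Since TS-Explore is defined for arbitrary $\mathcal{I}$ and the offline ${\sf Oracle}$ applied to a family of singletons simply returns the arm $\argmax_{i\in[m]}\theta_i$, running TS-Explore on this instance coincides exactly with its MAB version: outputting the optimal super arm $S^* = \{a^*\}$ is the same as outputting the optimal arm $a^*$, so the correctness guarantee (success probability at least $1-\delta$) is inherited verbatim. It therefore suffices to translate the CMAB complexity quantities $|\mathcal{I}|$ and $H_{1,c}$ into their MAB counterparts $m$ and $H_m$, after which the two stated bounds follow by substitution.

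I would first record the structural constants. As $\mathcal{I}$ has exactly $m$ elements, $|\mathcal{I}| = m$, and since any two distinct singletons satisfy $\{i\}\oplus\{j\} = \{i,j\}$, the width is $\width = \max_{S\ne S'}|S\oplus S'| = 2$. Next I would show the per-arm gaps coincide, i.e. $\Delta_{i,c} = \Delta_{i,m}$ for every $i$, which requires tracking the case split in the CMAB definition of $\Delta_{i,c}$. For $i \ne a^*$ (so $i\notin S^*$), the only super arm containing $i$ is $\{i\}$, hence $\max_{S:\,i\in S}\sum_{j\in S}\mu_j = \mu_i$ and $\Delta_{i,c} = \mu_{a^*} - \mu_i = \Delta_{i,m}$. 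For $i = a^*$ (so $i\in S^*$), the super arms avoiding $a^*$ are precisely the singletons $\{j\}$ with $j\ne a^*$, so $\max_{S:\,a^*\notin S}\sum_{j\in S}\mu_j = \max_{j\ne a^*}\mu_j$ and $\Delta_{a^*,c} = \mu_{a^*} - \max_{j\ne a^*}\mu_j = \Delta_{a^*,m}$. Combining these with $\width = 2$ gives $\sum_{i\in[m]} 1/\Delta_{i,c}^2 = H_m$ and therefore $H_{1,c} = 2H_m = \Theta(H_m)$.

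Finally I would substitute $|\mathcal{I}| = m$ and $H_{1,c} = \Theta(H_m)$ into both the general and the $q=\delta$ complexity bounds of Theorem~\ref{Theorem_Explore}: the constant factor $2$ is absorbed into the $O(\cdot)$ notation, and each logarithmic term $\log(|\mathcal{I}|H_{1,c})$ becomes $\log(mH_m)$ up to constants, yielding exactly the two claimed expressions. I expect no genuine obstacle in this argument; the only point demanding care is verifying that the two-case definition of $\Delta_{i,c}$ collapses to $\Delta_{i,m}$ — in particular that the optimal-arm case $i=a^*$ recovers $\Delta_{a^*,m} = \mu_{a^*} - \max_{i\ne a^*}\mu_i$ — together with confirming that $\width = 2$ is a constant rather than growing with $m$, so that the width factor embedded in $H_{1,c}$ does not inflate the final bound beyond $\Theta(H_m)$.
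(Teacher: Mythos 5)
Your proposal is correct and follows essentially the same route as the paper, which obtains the corollary directly from Theorem~\ref{Theorem_Explore} by viewing MAB as the CMAB special case $\mathcal{I} = \{\{1\},\dots,\{m\}\}$; your write-up simply makes explicit the substitutions the paper leaves implicit ($|\mathcal{I}| = m$, $\width = 2$, $\Delta_{i,c} = \Delta_{i,m}$, hence $H_{1,c} = 2H_m = \Theta(H_m)$).
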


\begin{remark}
The value $q$ in TS-Explore is used to control the number of times that we draw random samples at each time step. 
Note that $M(\delta,q,t) = {1\over q}\log (12|\mathcal{I}|^2t^2/\delta)$. Hence when $q$ becomes larger, we need fewer samples, but the complexity bound becomes worse. 
Here is a trade-off between the algorithm's complexity and the number of random samples it needs to draw. 
Our analysis shows that using $q = \delta^{1\over \beta}$ for some constant $\beta \ge 1$ can make sure that the complexity upper bound remains the same order, and reduce the number of random samples significantly.
\end{remark}

\begin{remark}
If the value $|\mathcal{I}|$ is unknown (which is common in real applications), we can use $2^m$ instead (in $M(\delta,q,t)$ and $C(\delta,q,t)$).
This only increases the constant term in the complexity upper bound 
and does not influence the major term $O(H_{1,c}\log{1\over \delta})$. %(when $\delta \to 0$).
\end{remark}

%\begin{table}[t]
%\caption{Comparison of TS-Explore and existing baselines.}
%\label{table}
%\begin{center}
%\begin{small}
%\begin{sc}
%\begin{tabular}{lcccr}
%\toprule
%Algorithm & Efficient & Complexity Bound  \\
%\midrule
%CLUCB & $\surd$ & $O(H_{2,c}\log{1\over \delta})$\\
%NaiveGapElim & $\times$ & $O(H_{0,c}\log{1\over \delta})$\\
%EfficientGapElim  & $\times$ & $O(H_{0,c}\log{1\over \delta})$\\
%\midrule
%\textbf{TS-Explore (ours)} &  $\surd$ & $O(H_{1,c}\log{1\over \delta})$\\
%\bottomrule
%\end{tabular}
%\end{sc}
%\end{small}
%\end{center}
%\end{table}

%
%The comparison of TS-Explore and existing baselines is shown in Table \ref{table}. 
%Compared to existing works, 
%As we can see, 
Theorem \ref{Theorem_Explore} shows that 
the complexity upper bound of TS-Explore is $\width$ lower than the CLUCB policy in \citep{Chen2014Combinatorial}. %(whose complexity bound is $O(H_{2,c}\log{1\over \delta})$).
%
%This is because that we use the Thompson Sampling approach, i.e., for base arms in $S^* \setminus S$, the sum of their random samples (in TS-Explore) only has a bias of $\tilde{O}(\sqrt{\sum_{i\in S^* \setminus S} {1\over N_i(t)}})$ towards the sum of their empirical means, while the sum of their upper confidence bounds (in CLUCB) has a bias of $\tilde{O}(\sum_{i\in S^* \setminus S} \sqrt{{1\over N_i(t)}})$. 
%
%Since the observations of the base arms are independently drawn by the environment, the gap between the sum of their empirical means and the sum of their real means is $\tilde{O}(\sqrt{\sum_{i\in S^* \setminus S} {1\over N_i(t)}})$ but not $\tilde{O}(\sum_{i\in S^* \setminus S} \sqrt{{1\over N_i(t)}})$.
%
%Therefore, our TS-Explore algorithm not only has correctness guarantee, but also achieves a $\width$ lower complexity upper bound. 
To the best of our knowledge, TS-Explore is the first algorithm that efficiently achieves an $O(H_{1,c}\log{1\over \delta})$ complexity upper bound in the most general combinatorial pure exploration setting. 
Besides, this is also the first theoretical analysis of using a TS-based algorithm (i.e., using random samples to make decisions) to deal with combinatorial pure exploration problems under the frequentist setting. 
%, and TS-Explore is also the first efficient algorithm that achieves a complexity upper bound of $O(H_{1,c}\log{1\over \delta})$.
%We also conduct some experiments to compare the complexity of TS-Verify with CLUCB, and the experimental results (which are listed in Appendix \ref{Appendix_Experiments}) demonstrate the effectiveness of our algorithm. %Please see Appendix \ref{Appendix_Experiments} for details.

Though the complexity bound of TS-Explore still has some gap with the optimal one $O(H_{0,c}\log{1\over \delta})$, we emphasize that this is because we only use the simple offline oracle and do not seek more detailed information about the combinatorial structure. This makes our policy more efficient.
%
%do not assume the existence of more powerful offline oracles,
%any further assumptions on the combinatorial structure, 
%and only use a simple offline oracle to solve the combinatorial pure exploration problems.
%
As a comparison, the existing optimal policies \citep{chen2017nearly,jourdan2021efficient} either suffer from an exponential time cost, or require the combinatorial structure to satisfy some specific constraints so that they can adopt more powerful offline oracles that explore detailed information about the combinatorial structure efficiently (please see Appendix \ref{Appendix_C} for discussions about this). 
Therefore, our algorithm is more efficient in the most general CMAB setting, and can be attractive in real applications with large scale %(which means a large $\width$) 
and complex combinatorial structures. %(which may result in a high implementation cost for the existing optimal learning algorithms).

On the other hand, the gap between the complexity upper bound and the complexity lower bound does not exist in the MAB setting. 
Corollary \ref{Corollary_Complexity} shows that the complexity upper bound of applying TS-Explore in the classic MAB setting matches the complexity lower bound $\Omega(H_m\log{1\over \delta})$ \citep{kalyanakrishnan2012PAC} when $\delta \to 0$, i.e., TS-Explore is asymptotically optimal in the classic MAB setting. 

Now we provide the proof of Theorem \ref{Theorem_Explore}.
%
%\begin{proof}
%

In this proof, we denote $\mathcal{J} = \{U: \exists S,S' \in \mathcal{I}, U = S \setminus S'\}$. % as the exchange sets of any two super arms $S$ and $S'$. 
Note that $|\mathcal{J}| \le |\mathcal{I}|^2$ and for any $U \in \mathcal{J}$, $|U| \le \width$. We also denote $L_1(t) = \log (12|\mathcal{I}|^2t^2/\delta)$ and $L_2(t) = \log(12|\mathcal{I}|^2t^2M(\delta,q,t)/\delta)$ to simplify notations.

We first define three events as follows:
$\mathcal{E}_{0,c}$ is the event that for all $t > 0$, $U \in \mathcal{J}$, 
\begin{equation*}
    |\sum_{i\in U}  (\hat{\mu}_i(t) - \mu_i)| \le \sqrt{ \sum_{i\in U}{1\over 2N_i(t)}L_1(t)};
\end{equation*}
$\mathcal{E}_{1,c}$ is the event that for all $t > 0$, $1 \le k \le M(\delta,q,t)$, $U \in \mathcal{J}$, 
\begin{eqnarray*}
    |\sum_{i\in U}  (\theta_i^k(t) - \hat{\mu}_i(t))| \le \sqrt{\sum_{i\in U}{2C(\delta,q,t)\over N_i(t)}L_2(t)};
\end{eqnarray*}
and $\mathcal{E}_{2,c}$ is the event that for all $t > 0$, $S,S' \in \mathcal{I}$. there exists $1 \le k \le M(\delta,q,t)$ such that 
\begin{equation*}
    \sum_{i\in S} \theta_i^k(t) - \sum_{i\in S'} \theta_i^k(t) \ge \sum_{i\in S} \mu_i - \sum_{i\in S'} \mu_i.
\end{equation*}

Roughly speaking, $\mathcal{E}_{0,c}$ means that for any $U \in \mathcal{J}$, the gap between its real mean and its empirical mean lies in the corresponding confidence radius; $\mathcal{E}_{1,c}$ means that for any $U \in \mathcal{J}$, the gap between its empirical mean and the sum of its random samples lies in the corresponding confidence radius; and $\mathcal{E}_{2,c}$ means that for any two super arms $S\ne S'$, at each time step, there is at least one sample set such that the gap between the sum of their random samples is larger than the gap between their real means. 

%
%Also note that we must have $L_2(t) \ge L_1(t)$.

%The following lemma shows that $\mathcal{E}_{0,c} \land \mathcal{E}_{1,c} \land \mathcal{E}_{2,c}$ happens with high probability.

We will first prove that $\mathcal{E}_{0,c} \land \mathcal{E}_{1,c} \land \mathcal{E}_{2,c}$ happens with high probability, and then show the correctness guarantee and complexity upper bound under $\mathcal{E}_{0,c} \land \mathcal{E}_{1,c} \land \mathcal{E}_{2,c}$.

\begin{lemma}\label{Lemma_123}
In Algorithm \ref{Algorithm_TSE}, we have that
\begin{equation*}
    \Pr[\mathcal{E}_{0,c} \land \mathcal{E}_{1,c} \land \mathcal{E}_{2,c}] \ge 1 - \delta.
\end{equation*}
\end{lemma}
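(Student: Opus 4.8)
The plan is to bound the failure probability of each of the three events separately and combine them, using the set inclusion
$\{\neg(\mathcal{E}_{0,c}\land\mathcal{E}_{1,c}\land\mathcal{E}_{2,c})\}\subseteq \neg\mathcal{E}_{0,c}\cup\neg\mathcal{E}_{1,c}\cup(\neg\mathcal{E}_{2,c}\cap\mathcal{E}_{0,c})$,
so that it suffices to show each of the three pieces has probability at most roughly $\delta/3$; the constant $12$ inside $L_1(t),L_2(t)$ and the factor $t^2$ are engineered precisely so that these budgets sum to at most $\delta$. I would dispatch $\mathcal{E}_{1,c}$ first, as it is cleanest. Conditioned on the history up to the sampling at step $t$ (which fixes $\hat{\mu}_i(t)$ and $N_i(t)$), the increments $\theta_i^k(t)-\hat{\mu}_i(t)$ are independent $\mathcal{N}(0,C(\delta,q,t)/N_i(t))$ across $i$, so $\sum_{i\in U}(\theta_i^k(t)-\hat{\mu}_i(t))$ is exactly Gaussian with variance $\sum_{i\in U}C(\delta,q,t)/N_i(t)$. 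A standard Gaussian tail bound gives failure probability $\le 2e^{-L_2(t)}$ for each fixed $(t,k,U)$, and a union bound over $U\in\mathcal{J}$ ($\le|\mathcal{I}|^2$ sets), over $k\le M(\delta,q,t)$, and over $t$ (using $\sum_t t^{-2}<2$) yields $\delta/3$; note that $L_2(t)$ carries the extra $\log M(\delta,q,t)$ factor exactly to pay for the union over $k$.

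For $\mathcal{E}_{0,c}$ the analogous calculation is the goal: for a fixed count vector the deviations $\hat{\mu}_i(t)-\mu_i$ are independent and sub-Gaussian with proxy $1/(4N_i(t))$ (observations lie in $[0,1]$), so by independence across arms $\sum_{i\in U}(\hat{\mu}_i(t)-\mu_i)$ is sub-Gaussian with proxy $\sum_{i\in U}1/(4N_i(t))$ --- this is precisely the tight $\sqrt{\sum_i 1/N_i}$ scaling on which the whole paper hinges --- and Hoeffding gives $2e^{-L_1(t)}$ per $(t,U)$. The \emph{main obstacle} lies here: $N_i(t)$ is random and chosen adaptively, so one cannot simply fix the counts, and a naive union bound over all realizable vectors $(N_i(t))_{i\in U}$ costs a factor $t^{|U|}\le t^{\width}$, which the $t^2$ inside $L_1(t)$ cannot absorb once $\width\ge 3$. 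I would instead use a time/count-uniform (self-normalized) concentration argument: peel each arm's count into geometric scales $[2^j,2^{j+1})$ and apply a maximal inequality to each arm's observation martingale, so that only $O((\log t)^{\width})$ scale-combinations need a union bound; since $(\log t)^{\width}\ll t^2/\delta\sim e^{L_1(t)}$, this polylogarithmic overhead is absorbed by the slack between $5\delta/6$ and $\delta$ (enlarging the constant in $L_1$ if necessary).

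Finally I would bound $\Pr[\neg\mathcal{E}_{2,c}\cap\mathcal{E}_{0,c}]$, and this is where the choices $C(\delta,q,t)=L_1(t)/\phi^2(q)$ and $M(\delta,q,t)=\tfrac1q L_1(t)$ pay off. Fix $(t,S,S')$ and condition on the history; the sample gap $D^k=\sum_{i\in S}\theta_i^k(t)-\sum_{i\in S'}\theta_i^k(t)$ is Gaussian with mean $\sum_{i\in S}\hat{\mu}_i(t)-\sum_{i\in S'}\hat{\mu}_i(t)$ and variance $\sigma^2=\sum_{i\in S\oplus S'}C(\delta,q,t)/N_i(t)$. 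On the event $\mathcal{E}_{0,c}$, applying its bound to the two sets $S\setminus S',\,S'\setminus S\in\mathcal{J}$ together with $\sqrt{a}+\sqrt{b}\le\sqrt{2(a+b)}$ shows that the mean of $D^k$ is at least $\big(\sum_{i\in S}\mu_i-\sum_{i\in S'}\mu_i\big)-\sigma\,\phi(q)$, so each single sample satisfies $D^k\ge\sum_{i\in S}\mu_i-\sum_{i\in S'}\mu_i$ with conditional probability at least $\Phi(\phi(q),0,1)=q$. Since the $M(\delta,q,t)$ samples are drawn independently, the probability that none of them works is at most $(1-q)^{M(\delta,q,t)}\le e^{-qM(\delta,q,t)}=\delta/(12|\mathcal{I}|^2t^2)$, and a union bound over the $\le|\mathcal{I}|^2$ pairs $(S,S')$ and over $t$ gives $\delta/6$. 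Summing the three contributions yields at most $5\delta/6\le\delta$, completing the proof; the only genuinely delicate step is the adaptive, tight concentration establishing $\mathcal{E}_{0,c}$.
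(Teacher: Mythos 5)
Your proposal is correct, and for two of the three events it is essentially identical to the paper's own proof: the same decomposition $\neg(\mathcal{E}_{0,c}\land\mathcal{E}_{1,c}\land\mathcal{E}_{2,c})\subseteq\neg\mathcal{E}_{0,c}\cup\neg\mathcal{E}_{1,c}\cup(\neg\mathcal{E}_{2,c}\cap\mathcal{E}_{0,c})$; for $\mathcal{E}_{1,c}$ the exact-Gaussian tail conditional on the history plus a union bound over $(t,k,U)$, with $L_2$ paying for the union over $k$; and for $\mathcal{E}_{2,c}$ precisely the paper's computation --- on $\mathcal{E}_{0,c}$ the empirical gap dominates $\Delta_{S,S'}$ minus the two confidence radii, the inequality $\sqrt{a}+\sqrt{b}\le\sqrt{2(a+b)}$ together with the choice $C(\delta,q,t)=L_1(t)/\phi^2(q)$ gives per-sample success probability at least $q$, and independence across the $M(\delta,q,t)$ sample sets gives $(1-q)^{M(\delta,q,t)}\le\delta/(12|\mathcal{I}|^2t^2)$. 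The genuine difference is your treatment of $\mathcal{E}_{0,c}$. The paper disposes of it in one line: it asserts that $\sum_{i\in U}(\hat{\mu}_i(t)-\mu_i)$ is $\sum_{i\in U}\frac{1}{4N_i(t)}$ sub-Gaussian and applies the fixed-count tail bound with a union over $(t,U)$ only, thereby implicitly treating the adaptive counts $N_i(t)$ as deterministic. You are right that this is the delicate step, and your peeling/self-normalized route is the standard rigorous remedy: because the $N_i(t)$ are data-dependent, one must either union-bound over realizable count vectors --- which inflates $L_1$ by $\Theta(\width\log t)$ and would surrender much of the very factor of $\width$ the paper exists to save (in fact, with the paper's constants that naive union bound diverges for every $\width\ge 1$, not merely $\width\ge 3$ as you state; only the single-arm case can be rescued at no cost, by indexing the event by the count $n$ rather than by $t$) --- or run a time-uniform peeling argument over geometric count scales as you sketch, which costs only an additive $\Theta(\width\log\log t)$ inside the logarithm plus a universal constant factor on the radius, leaving the order of the bound in Theorem \ref{Theorem_Explore} intact. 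One correction to your accounting: this polylogarithmic overhead cannot be ``absorbed by the slack between $5\delta/6$ and $\delta$,'' since that slack is a constant while $(\log t)^{\width}$ grows; it must be paid inside $L_1(t)$ itself, as your parenthetical hedge suggests. In short, your route buys rigor exactly where the paper's argument is weakest, at the price of a longer proof and slightly worse constants, while the paper's route is shorter but silently elides the adaptivity obstacle you identified.
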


%We will first prove that $\Pr[\mathcal{E}_{0,c} \land \mathcal{E}_{1,c} \land \mathcal{E}_{2,c}] \ge 1 - \delta$, and then show the correctness guarantee and complexity upper bound under event $\mathcal{E}_{0,c} \land \mathcal{E}_{1,c} \land \mathcal{E}_{2,c}$.
%
\begin{proof} %In the rest of our proof, we denote $L_1(t) = \log (12|\mathcal{I}|^2t^2/\delta)$ and $L_2(t) = \log(12|\mathcal{I}|^2t^2M(\delta,q,t)/\delta)$ to simplify notations.
Note that the random variable $(\hat{\mu}_i(t) - \mu_i)$ is zero-mean and ${1\over 4N_i(t)}$ sub-Gaussian, and for different $i$, the random variables $(\hat{\mu}_i(t) - \mu_i)$'s are independent. Therefore, $\sum_{i\in U}  (\hat{\mu}_i(t) - \mu_i)$ is zero-mean and $\sum_{i\in U}{1\over 4N_i(t)}$ sub-Gaussian. Then by concentration inequality of sub-Gaussian random variables (see details in Appendix \ref{Appendix_D}), 
\begin{eqnarray*}
    &&\Pr\left[|\sum_{i\in U}  (\hat{\mu}_i(t) - \mu_i)| > \sqrt{\sum_{i\in U}{1\over 2N_i(t)}L_1(t)}\right] \\
    &\le& 2\exp(-L_1(t))\\
    &=& {\delta \over 6|\mathcal{I}|^2t^2}.
\end{eqnarray*}

This implies that
\begin{equation*}
    \Pr[\neg \mathcal{E}_{0,c}] \le \sum_{U,t} {\delta \over 6|\mathcal{I}|^2t^2} \le \sum_{t} {\delta \over 6t^2} \le {\delta \over 3},
\end{equation*}
where the second inequality is because that $|\mathcal{J}| \le |\mathcal{I}|^2$, and the third inequality is because that $\sum_t {1\over t^2} \le 2$.

Similarly, the random variable $(\theta_i^k(t) - \hat{\mu}_i(t))$ is a zero-mean Gaussian random variable with variance ${C(\delta, q, t)\over N_i(t)}$, and for different $i$, the random variables $(\theta_i^k(t) - \hat{\mu}_i(t))$'s are also independent.
Then by concentration inequality, 
\begin{eqnarray*}
    \!\!\!\!\!\!\!&&\Pr\left[|\sum_{i\in U}  (\theta_i^k(t) - \hat{\mu}_i(t))| > \sqrt{\sum_{i\in U}{2C(\delta,q,t)\over N_i(t)}L_2(t)}\right]\\
    \!\!\!\!\!\!\!&\le& 2\exp(-L_2(t)) \\
    \!\!\!\!\!\!\!&=& {\delta \over 6|\mathcal{I}|^2t^2M(\delta,q,t)}.
\end{eqnarray*}

This implies that
\begin{equation*}
    \Pr[\neg \mathcal{E}_{1,c}] \le \sum_{U,t, k} {\delta \over 6|\mathcal{I}|^2t^2M(\delta,q,t)} \le \sum_{U,t} {\delta \over 6|\mathcal{I}|^2t^2} \le {\delta \over 3},
\end{equation*}
where the second inequality is because that there are totally $M(\delta,q,t)$ sample sets at time step $t$.

Finally we consider the probability $\Pr[\neg \mathcal{E}_{2,c} | \mathcal{E}_{0,c}]$. In the following, we denote $\Delta_{S,S'} \triangleq \sum_{i\in S} \mu_i - \sum_{i\in S'} \mu_i = \sum_{i\in S\setminus S'} \mu_i - \sum_{i\in S'\setminus S} \mu_i $ as the reward gap (under the real means) between $S$ and $S'$. 

For any fixed $S \ne S'$, we denote $A(t) = \sum_{i\in S \setminus S'} {1\over 2N_i(t)}$, $B(t) = \sum_{i\in S' \setminus S} {1\over 2N_i(t)}$ and $C(t) = C(\delta, q,t)$ to simplify notations.
Then under event $\mathcal{E}_{0,c}$, we must have that $\sum_{i\in S \setminus S'} \hat{\mu}_{i}(t) - \sum_{i\in S'\setminus S } \hat{\mu}_{i}(t) \ge (\sum_{i\in  S \setminus S'} \mu_{i} - \sqrt{A(t)L_1(t)}) - (\sum_{i\in S'\setminus S } \mu_{i} +  \sqrt{B(t)L_1(t)}) \ge \Delta_{S,S'} - \sqrt{A(t)L_1(t)} - \sqrt{B(t)L_1(t)}$. 
%
%Here $A(t) = \sum_{i\in S \setminus S'} {1\over N_i(t)}, B(t) = \sum_{i\in S' \setminus S} {1\over N_i(t)}$ and $L_1(t) = \log (24|\mathcal{I}|^2t^2/\delta)$.
%

Since $\sum_{i\in S\setminus S' } \theta_{i}^k(t) - \sum_{i\in S'\setminus S } \theta_{i}^k(t)$ is a Gaussian random variable with mean $\sum_{i\in S \setminus S'} \hat{\mu}_{i}(t) - \sum_{i\in S'\setminus S } \hat{\mu}_{i}(t)$ and variance $2A(t)C(t) + 2B(t)C(t)$, then under event $\mathcal{E}_{0,c}$, %we have that
(recall that $\Phi(x,\mu,\sigma^2) = \Pr_{X \sim \mathcal{N}(\mu, \sigma^2)}[X \ge x]$):
\begin{eqnarray}
\!\!\!\!\!\nonumber&&\!\!\!\!\!\Pr\left[\sum_{i\in S} \theta_i^k(t) - \sum_{i\in S'} \theta_i^k(t) \ge \Delta_{S,S'}\right] \\
\nonumber\!\!\!\!\!&=&\!\!\!\!\! \Pr\left[\sum_{i\in S \setminus S'} \theta_i^k(t) - \sum_{i\in S' \setminus S} \theta_i^k(t) \ge \Delta_{S,S'}\right]\\
\!\!\!\!\!\nonumber&=&\!\!\!\!\! \Phi\!\left(\!\Delta_{S,S'}, \!\!\!\sum_{i\in S \setminus S'} \!\!\hat{\mu}_{i}(t) - \!\!\!\!\!\sum_{i\in S'\setminus S } \!\!\hat{\mu}_{i}(t), 2A(t)C(t) + 2B(t)C(t)\!\right)\\
\!\!\!\!\!\nonumber&\ge&\!\!\!\!\! \Phi(\Delta_{S,S'}, \Delta_{S,S'} - \sqrt{A(t)L_1(t)} - \sqrt{B(t)L_1(t)}, \\
\!\!\!\!\!\nonumber&&\quad \quad\quad\quad\quad\quad\quad\quad\quad\quad\quad\quad\quad 2A(t)C(t) + \!2B(t)C(t))\\
\!\!\!\!\!\nonumber&=&\!\!\!\!\! \Phi(\sqrt{A(t)L_1(t)} + \!\sqrt{B(t)L_1(t)}, 0,  2A(t)C(t) + \!2B(t)C(t))\\
\!\!\!\!\!\nonumber&=&\!\!\!\!\! \Phi\left(\sqrt{L_1(t)\over C(t)} \cdot {\sqrt{A(t)} + \sqrt{B(t)} \over \sqrt{2A(t)+2B(t)}}, 0,  1\right)\\
\!\!\!\!\!\nonumber&\ge&\!\!\!\!\!\Phi\left(\sqrt{L_1(t)\over C(t)}, 0, 1\right)\\
\!\!\!\!\!\nonumber&=&\!\!\!\!\! q,
\end{eqnarray}
%%where Eq. (\ref{EQ_1}) is because that ${\sqrt{A(t)} + \sqrt{B(t)} \over \sqrt{A(t)+B(t)}} \le 2$; 
where the last equation is because that we choose $C(t) = {L_1(t) \over \phi^2(q)}$ and $\Phi(\phi(q), 0,1) = q$ (by definition of $\phi$). 

Note that the parameter sets $\{\bm{\theta}^k(t)\}_{k=1}^{M(\delta,q,t)}$ are chosen independently, therefore under event $\mathcal{E}_{0,c}$, we have that
\begin{eqnarray*}
&&\Pr\left[\forall k, \sum_{i\in S} \theta_i^k(t) - \sum_{i\in S'} \theta_i^k(t) < \Delta_{S,S'}\right] \\
&\le& (1-q)^{M(\delta,q,t)} \\
&\le& {\delta \over 12|\mathcal{I}|^2t^2},
\end{eqnarray*}
where that last inequality is because that we choose $M(\delta,q,t) = {1\over q}\log (12|\mathcal{I}|^2t^2/\delta)$.

This implies that
\begin{equation*}
    \Pr[\neg \mathcal{E}_{2,c} | \mathcal{E}_{0,c}] \le \sum_{t, S, S'} {\delta \over 12|\mathcal{I}|^2t^2} \le \sum_{t} {\delta \over 12t^2} \le {\delta \over 3}.
\end{equation*}

All these show that $\Pr[\mathcal{E}_{0,c} \land \mathcal{E}_{1,c} \land \mathcal{E}_{2,c}] \ge 1 - \delta$.
\end{proof}

%$\mathcal{E}_{0,c}$ and $\mathcal{E}_{1,c}$ are commonly-used concentration events (of empirical means and random samples, respectively).
%
%As for $\mathcal{E}_{2,c}$, it is a novel event in our paper, and plays a critical role in our analysis $\mathcal{E}_{2,c}$ and the novel mechanism (along that records all the $\tilde{\Delta}^k_t$'s and focuses on $k^*_t \gets \argmax_k \tilde{\Delta}^k_t$) .

%
%All the above analysis shows that $\Pr[\mathcal{E}_{0,c} \land \mathcal{E}_{1,c} \land \mathcal{E}_{2,c}] \ge 1 - \delta$, t
Then it is sufficient to prove that under event $\mathcal{E}_{0,c} \land \mathcal{E}_{1,c} \land \mathcal{E}_{2,c}$, TS-Explore works correctly with complexity upper bound shown in Theorem \ref{Theorem_Explore}.

%Firstly, $\mathcal{E}_{2,c}$ directly leads to the correctness guarantee of TS-Explore: if $\hat{S}_t \ne S^*$, then under event $\mathcal{E}_{2,c}$, there exists $k$ such that $\sum_{i\in S^*} \theta^k_i(t) - \sum_{i\in \hat{S}_t} \theta^k_i(t) \ge \Delta_{S^*,\hat{S}_t} > 0$. Therefore $\tilde{S}^k_t \ne \hat{S}_t$ and we will not output $\hat{S}_t$. Because of this, TS-Explore can only return $S^*$ under event $\mathcal{E}_{0,c} \land \mathcal{E}_{1,c} \land \mathcal{E}_{2,c}$, and we finish the proof of its correctness.

Firstly, we prove that TS-Explore will output $S^*$. The proof is quite straightforward: if $\hat{S}_t \ne S^*$, then under event $\mathcal{E}_{2,c}$, there exists $k$ such that $\sum_{i\in S^*} \theta^k_i(t) - \sum_{i\in \hat{S}_t} \theta^k_i(t) \ge \Delta_{S^*,\hat{S}_t} > 0$. Therefore $\tilde{S}^k_t \ne \hat{S}_t$ and we will not output $\hat{S}_t$. Because of this, TS-Explore can only return $S^*$ under event $\mathcal{E}_{0,c} \land \mathcal{E}_{1,c} \land \mathcal{E}_{2,c}$, and we finish the proof of its correctness.

Then we come to bound the complexity of TS-Explore, and we will use the following lemma %as the key lemma 
in our analysis. %(which adapr similar with the Lemma XXX in \cite{XXX}).

\begin{restatable}{lemma}{LemmaComplexity}\label{Lemma_Complexity}
Under event $\mathcal{E}_{0,c} \land \mathcal{E}_{1,c} \land \mathcal{E}_{2,c}$, a base arm $i$ will not be pulled if $N_i(t) \ge {98\width C(t)L_2(t) \over \Delta_{i,c}^2}$.
\end{restatable}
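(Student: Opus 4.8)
The plan is to argue by contradiction: assume base arm $i$ is pulled at step $t$ and derive an upper bound on $N_i(t)$ strictly below $98\width C(t)L_2(t)/\Delta_{i,c}^2$. Since TS-Explore pulls $i(t)=\argmin_{j\in\hat{S}_t\oplus\tilde{S}_t}N_j(t)$, this assumption hands me three facts I will reuse: $i\in\hat{S}_t\oplus\tilde{S}_t$; every $j\in\hat{S}_t\oplus\tilde{S}_t$ has $N_j(t)\ge N_i(t)$; and $|\hat{S}_t\oplus\tilde{S}_t|\le\width$. Write $U_1=\tilde{S}_t\setminus\hat{S}_t$ and $U_2=\hat{S}_t\setminus\tilde{S}_t$, both in $\mathcal{J}$, and abbreviate the two confidence radii $r_1=\sqrt{\sum_{j\in U_1}\frac{L_1(t)}{2N_j(t)}}+\sqrt{\sum_{j\in U_2}\frac{L_1(t)}{2N_j(t)}}$ and $r_2=\sqrt{\sum_{j\in U_1}\frac{2C(t)L_2(t)}{N_j(t)}}+\sqrt{\sum_{j\in U_2}\frac{2C(t)L_2(t)}{N_j(t)}}$, so that $\mathcal{E}_{0,c}$ controls the $U_1,U_2$ differences by $r_1$ and $\mathcal{E}_{1,c}$ controls them by $r_2$.

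The first step is purely combinatorial: since $i$ lies in exactly one of $\hat{S}_t,\tilde{S}_t$, one of these super arms witnesses the gap $\Delta_{i,c}$. If $i\notin S^*$, the super arm among $\{\hat{S}_t,\tilde{S}_t\}$ that \emph{contains} $i$ has reward gap at least $\Delta_{i,c}$ by definition; if $i\in S^*$, the one that does \emph{not} contain $i$ does. Either way $\max(\Delta_{S^*,\hat{S}_t},\Delta_{S^*,\tilde{S}_t})\ge\Delta_{i,c}$, so it suffices to bound both $\Delta_{S^*,\hat{S}_t}$ and $\Delta_{S^*,\tilde{S}_t}$ by $O(r_1+r_2)$. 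For the first, $\mathcal{E}_{2,c}$ applied to the pair $(S^*,\hat{S}_t)$ gives a sample index $k$ with $\sum_{j\in S^*}\theta_j^k(t)-\sum_{j\in\hat{S}_t}\theta_j^k(t)\ge\Delta_{S^*,\hat{S}_t}$; since $\tilde{S}_t^k$ is $\bm{\theta}^k(t)$-optimal this forces $\tilde{\Delta}_t^k\ge\Delta_{S^*,\hat{S}_t}$, hence $\tilde{\Delta}_t\ge\Delta_{S^*,\hat{S}_t}$. Conversely, writing $\tilde{\Delta}_t=\sum_{j\in U_1}\theta_j^{k^*_t}(t)-\sum_{j\in U_2}\theta_j^{k^*_t}(t)$, the upper side of $\mathcal{E}_{1,c}$ together with the empirical optimality of $\hat{S}_t$ (which gives $\sum_{j\in U_1}\hat{\mu}_j(t)-\sum_{j\in U_2}\hat{\mu}_j(t)\le0$) yields $\tilde{\Delta}_t\le r_2$. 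Thus $\Delta_{S^*,\hat{S}_t}\le r_2$.

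The bound on $\Delta_{S^*,\tilde{S}_t}$ is the crux, and the trick is to route it through the real gap $\Delta_{\tilde{S}_t,\hat{S}_t}$ between the two \emph{chosen} super arms, so that the $S^*$ contributions cancel and only exchange-set radii survive. I will use the identity $\Delta_{S^*,\tilde{S}_t}=\Delta_{S^*,\hat{S}_t}-\Delta_{\tilde{S}_t,\hat{S}_t}$ and lower bound $\Delta_{\tilde{S}_t,\hat{S}_t}$: from $\tilde{\Delta}_t\ge0$ and the upper side of $\mathcal{E}_{1,c}$ the empirical gap $\sum_{j\in U_1}\hat{\mu}_j(t)-\sum_{j\in U_2}\hat{\mu}_j(t)$ is at least $\tilde{\Delta}_t-r_2$, and the lower side of $\mathcal{E}_{0,c}$ then converts this into $\Delta_{\tilde{S}_t,\hat{S}_t}\ge\tilde{\Delta}_t-r_1-r_2$. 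Substituting and using $\Delta_{S^*,\hat{S}_t}\le\tilde{\Delta}_t$ gives $\Delta_{S^*,\tilde{S}_t}\le r_1+r_2$. I expect this to be the main obstacle, because the naive route — comparing $\tilde{S}_t$ directly with $S^*$ — produces radii supported on $S^*\oplus\tilde{S}_t$, whose arms may be heavily explored and cannot be bounded below by $N_i(t)$; the cancellation of $S^*$ is precisely what keeps every radius on the exchange set.

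Finally I collapse everything. Combining the combinatorial claim with the two estimates gives $\Delta_{i,c}\le r_1+r_2$. Because every $j$ in the exchange set has $N_j(t)\ge N_i(t)$ and $|\hat{S}_t\oplus\tilde{S}_t|\le\width$, the inequality $\sqrt{a}+\sqrt{b}\le\sqrt{2(a+b)}$ gives $r_1\le\sqrt{\width L_1(t)/N_i(t)}$ and $r_2\le2\sqrt{\width C(t)L_2(t)/N_i(t)}$. To put these on the same footing I use $C(t)=L_1(t)/\phi^2(q)$ with the Gaussian tail estimate $\phi^2(q)\le2\log(1/q)\le2L_2(t)$ (valid since $q\le0.1$ forces $\phi(q)>0$, and $q\ge\delta$ gives $\log(1/q)\le L_2(t)$), so $L_1(t)\le2C(t)L_2(t)$ and $r_1\le\sqrt{2}\,\sqrt{\width C(t)L_2(t)/N_i(t)}$. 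Hence $\Delta_{i,c}\le(2+\sqrt{2})\sqrt{\width C(t)L_2(t)/N_i(t)}$, i.e.\ $N_i(t)\le(2+\sqrt{2})^2\,\width C(t)L_2(t)/\Delta_{i,c}^2<98\,\width C(t)L_2(t)/\Delta_{i,c}^2$, contradicting the hypothesis; the generous slack between $(2+\sqrt{2})^2\approx11.7$ and $98$ comfortably absorbs the $L_1$-versus-$C L_2$ conversion, and tracking these constants is the only routine part.
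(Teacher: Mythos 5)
Your proposal is correct, and it reorganizes the argument in a way that genuinely differs from the paper's proof. The shared core is the same: assume $i$ is pulled with a large $N_i(t)$, exploit that $i=\argmin_{j\in\hat{S}_t\oplus\tilde{S}_t}N_j(t)$ so every exchange-set count is at least $N_i(t)$, use $\mathcal{E}_{2,c}$ on $(S^*,\hat{S}_t)$ together with oracle-optimality of $\tilde{S}^k_t$ to get $\tilde{\Delta}_t\ge\Delta_{S^*,\hat{S}_t}$, and use $\mathcal{E}_{1,c}$ plus empirical optimality of $\hat{S}_t$ to get $\tilde{\Delta}_t\le r_2$; your combinatorial observation $\max(\Delta_{S^*,\hat{S}_t},\Delta_{S^*,\tilde{S}_t})\ge\Delta_{i,c}$ is exactly the paper's split into its Cases i and ii. The divergence is in how the $\tilde{S}_t$-side gap is controlled: the paper never uses $\mathcal{E}_{0,c}$ in this lemma at all — for the case $i\in S^*\oplus\tilde{S}_t$ it invokes $\mathcal{E}_{2,c}$ a \emph{second} time, on the pair $(S^*,\tilde{S}_t)$, and telescopes entirely in sample space through $\hat{S}_t$ to contradict the maximality of $\tilde{\Delta}_t^{k^*_t}$ — whereas you invoke $\mathcal{E}_{2,c}$ only once and instead route through the true means via the identity $\Delta_{S^*,\tilde{S}_t}=\Delta_{S^*,\hat{S}_t}-\Delta_{\tilde{S}_t,\hat{S}_t}$, using $\mathcal{E}_{0,c}$ to convert the empirical gap into the real gap $\Delta_{\tilde{S}_t,\hat{S}_t}$. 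The price of your route is the extra unit conversion $L_1(t)\le 2C(t)L_2(t)$, which you justify correctly via the Gaussian tail bound $\phi^2(q)\le 2\log(1/q)$ and $q\in[\delta,0.1]$ (so $\log(1/q)\le\log(1/\delta)\le L_2(t)$); the paper avoids any such conversion because its radii only ever involve $C(t)L_2(t)$. What your route buys is a single unified argument (no case-by-case contradiction) and a sharper constant, $(2+\sqrt{2})^2=6+4\sqrt{2}\approx 11.7$ in place of $98$, which incidentally shows the paper's constant has ample slack. One small point worth noting if you write this up: your claim $\tilde{\Delta}_t\ge\Delta_{S^*,\hat{S}_t}$ should be stated as holding also when $\hat{S}_t=S^*$ (where it is trivial since $\tilde{\Delta}_t\ge 0$), so that no implicit assumption $\hat{S}_t\ne S^*$ sneaks in; as written your argument indeed covers this degenerate case.
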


Due to space limit, we only prove Lemma \ref{Lemma_Complexity} for the case that $(\hat{S}_t = S^*) \lor (\tilde{S}_t = S^*)$ %or $\tilde{S}_t = S^*$ 
in our main text, and defer the complete proof to Appendix \ref{Appendix_A}. %Although the proof idea for the case that $(\hat{S}_t \ne S^*) \land (\tilde{S}_t \ne S^*)$
%and $\tilde{S}_t \ne S^*$ 
%is similar, the steps are more complicated and we defer them to Appendix \ref{Appendix_A}.

\begin{proof}
We will prove this lemma by contradiction.

First we consider the case that $(\hat{S}_t = S^* )\land (\tilde{S}_t \ne S^*)$. In this case, $i\in S^* \oplus \tilde{S}_t$, which implies that $\Delta_{i,c} \le \Delta_{S^*, \tilde{S}_t}$. If we choose a base arm $i$ with $N_i(t) \ge {98\width C(t)L_2(t) \over \Delta_{i,c}^2} \ge {98\width C(t)L_2(t) \over \Delta_{S^*, \tilde{S}_t}^2}$ to pull, we know that $\forall j \in S^* \oplus \tilde{S}_t$, $N_j(t) \ge {98\width C(t)L_2(t) \over \Delta_{S^*, \tilde{S}_t}^2}$. This implies that $\sqrt{\sum_{j\in S^* \setminus \tilde{S}_t} {2\over N_j(t)}} \le {\Delta_{S^*, \tilde{S}_t}\over 7\sqrt{C(t)L_2(t)}}$ and $\sqrt{\sum_{j\in \tilde{S}_t \setminus S^*} {2\over N_j(t)}} \le {\Delta_{S^*, \tilde{S}_t}\over 7\sqrt{C(t)L_2(t)}}$.

By $\mathcal{E}_{2,c}$, there exists $k$ such that $\sum_{j\in S^* \setminus \tilde{S}_t} \theta_j^k(t) - \sum_{j\in \tilde{S}_t \setminus S^*} \theta_j^k(t) \ge \Delta_{S^*, \tilde{S}_t}$. By $\mathcal{E}_{1,c}$, $|\sum_{j\in S^* \setminus \tilde{S}_t} (\theta_j^k(t) - \hat{\mu}_j(t))| \le \sqrt{\sum_{j\in S^* \setminus \tilde{S}_t}{2C(t)\over N_j(t)}L_2(t)} \le {\Delta_{S^*, \tilde{S}_t}\over 7}$ and similarly $|\sum_{j\in \tilde{S}_t \setminus S^*} (\theta_j^k(t) - \hat{\mu}_j(t))| \le {\Delta_{S^*, \tilde{S}_t}\over 7}$. Hence
\begin{eqnarray*}
\!\!\!\!\!&&\sum_{j\in S^* \setminus \tilde{S}_t} \hat{\mu}_j(t) - \sum_{j\in \tilde{S}_t \setminus S^*} \hat{\mu}_j(t) \\
\!\!\!\!\!&\ge&  \sum_{j\in S^* \setminus \tilde{S}_t} \theta_j^k(t) - \sum_{j\in \tilde{S}_t \setminus S^*} \theta_j^k(t)\\
\!\!\!\!\!&&- |\!\!\!\!\!\sum_{j\in S^* \setminus \tilde{S}_t} (\theta_j^k(t) - \hat{\mu}_j(t))| - |\!\!\!\!\!\sum_{j\in \tilde{S}_t \setminus S^*} (\theta_j^k(t) - \hat{\mu}_j(t))|\\
\!\!\!\!\!&\ge&{5\Delta_{S^*, \tilde{S}_t}\over 7}.
\end{eqnarray*}

$\mathcal{E}_{1,c}$ also means $|\sum_{j\in S^* \setminus \tilde{S}_t} (\theta_j^{k_t^*}(t) - \hat{\mu}_j(t))| \le {\Delta_{S^*, \tilde{S}_t}\over 7}$ and $|\sum_{j\in \tilde{S}_t \setminus S^*} (\theta_j^{k_t^*}(t) - \hat{\mu}_j(t))| \le {\Delta_{S^*, \tilde{S}_t}\over 7}$. Thus we know that 
\begin{eqnarray*}
\!\!\!\!\!&&\sum_{j\in S^* \setminus \tilde{S}_t} \theta_j^{k_t^*}(t) - \sum_{j\in \tilde{S}_t \setminus S^*} \theta_j^{k_t^*}(t) \\
\!\!\!\!\!&\ge&  \sum_{j\in S^* \setminus \tilde{S}_t} \hat{\mu}_j(t) - \sum_{j\in \tilde{S}_t \setminus S^*} \hat{\mu}_j(t)\\
\!\!\!\!\!&&- |\!\!\!\!\!\sum_{j\in S^* \setminus \tilde{S}_t} (\theta_j^{k_t^*}(t) - \hat{\mu}_j(t))| - |\!\!\!\!\!\sum_{j\in \tilde{S}_t \setminus S^*} (\theta_j^{k_t^*}(t) - \hat{\mu}_j(t))|\\
\!\!\!\!\!&\ge&{3\Delta_{S^*, \tilde{S}_t}\over 7}.
\end{eqnarray*}
%\begin{eqnarray*}
%\sum_{j\in S^* \setminus \tilde{S}_t} \theta_j^{k_t^*}(t) - \sum_{j\in \tilde{S}_t \setminus S^*} \theta_j^{k_t^*}(t) &\ge&  {3\Delta_{S^*, \tilde{S}_t}\over 7}.
%\end{eqnarray*}

This contradicts with the fact that $\tilde{S}_t$ is the optimal super arm under the $k_t^*$-th sample set $\bm{\theta}^{k_t^*}(t)$.

Then we come to the case that $(\hat{S}_t \ne S^*) \land (\tilde{S}_t = S^*)$. In this case, $i\in S^* \oplus \hat{S}_t$, which implies that $\Delta_{i,c} \le \Delta_{S^*, \hat{S}_t}$. If we choose a base arm $i$ with $N_i(t) \ge {98\width C(t)L_2(t) \over \Delta_{i,c}^2} \ge {98\width C(t)L_2(t) \over \Delta_{S^*, \hat{S}_t}^2}$ to pull, we have that $\sqrt{\sum_{j\in S^* \setminus \hat{S}_t} {2\over N_j(t)}} \le {\Delta_{S^*, \hat{S}_t}\over 7\sqrt{C(t)L_2(t)}}$ and $\sqrt{\sum_{j\in \hat{S}_t \setminus S^*} {2\over N_j(t)}} \le {\Delta_{S^*, \hat{S}_t}\over 7\sqrt{C(t)L_2(t)}}$.

By $\mathcal{E}_{2,c}$, there exists $k$ such that $\sum_{j\in S^* \setminus \hat{S}_t} \theta_j^k(t) - \sum_{j\in \hat{S}_t \setminus S^*} \theta_j^k(t) \ge \Delta_{S^*, \hat{S}_t}$. %thus $\sum_{j\in S^* \setminus \hat{S}_t} \theta_j^{k_t^*}(t) - \sum_{j\in \hat{S}_t \setminus S^*} \theta_j^{k_t^*}(t) \ge \Delta_{S^*, \hat{S}_t}$. 
By $\mathcal{E}_{1,c}$, $|\sum_{j\in S^* \setminus \hat{S}_t} (\theta_j^k(t) - \hat{\mu}_j(t))| \le {\Delta_{S^*, \hat{S}_t}\over 7}$ and $|\sum_{j\in \hat{S}_t \setminus S^*} (\theta_j^k(t) - \hat{\mu}_j(t))| \le {\Delta_{S^*, \hat{S}_t}\over 7}$. All these imply 
\begin{eqnarray*}
\!\!\!\!\!&&\sum_{j\in S^* \setminus \hat{S}_t} \hat{\mu}_j(t) - \sum_{j\in \hat{S}_t \setminus S^*} \hat{\mu}_j(t) \\
\!\!\!\!\!&\ge&  \sum_{j\in S^* \setminus \hat{S}_t} \theta_j^k(t) - \sum_{j\in \hat{S}_t \setminus S^*} \theta_j^k(t)\\
\!\!\!\!\!&&- |\!\!\!\!\!\sum_{j\in S^* \setminus \hat{S}_t} (\theta_j^k(t) - \hat{\mu}_j(t))| - |\!\!\!\!\!\sum_{j\in \hat{S}_t \setminus S^*} (\theta_j^k(t) - \hat{\mu}_j(t))|\\
\!\!\!\!\!&\ge&{5\Delta_{S^*, \hat{S}_t}\over 7}.
\end{eqnarray*}
%\begin{eqnarray*}
%\sum_{j\in S^* \setminus \hat{S}_t} \hat{\mu}_j(t) - \sum_{j\in \hat{S}_t \setminus S^*} \hat{\mu}_j(t) &\ge&  {5\Delta_{S^*, \tilde{S}_t}\over 7}.
%\end{eqnarray*}

This contradicts with the fact that $\hat{S}_t$ is the empirically optimal super arm.
\end{proof}

Lemma \ref{Lemma_Complexity} is similar to Lemma 10 in \cite{Chen2014Combinatorial}. Both of them give an upper bound for the number of pulls on base arm $i$. The key difference is that in \cite{Chen2014Combinatorial}, for an arm set $U \in \mathcal{J}$, the gap between its real mean and its upper confidence bound is $\tilde{O}(\sum_{i\in U} \sqrt{1\over N_i})$, which means that we require all the $N_i$'s to be $\tilde{\Theta}({\width^2 
\over\Delta_{i,c}^2})$ to make sure that this gap is less than $\Delta_{i,c}$. In our paper, based on event $\mathcal{E}_{0,c} \land \mathcal{E}_{1,c}$, the gap between $U$'s real mean and the sum of random samples in $U$ is $\tilde{O}(\sqrt{\sum_{i\in U}{C(t) \over  N_i}})$. Therefore, we only require all the $N_i$'s to be $\tilde{\Theta}({\width C(t) \over \Delta_{i,c}^2})$ to make sure that this gap is less than $\Delta_{i,c}$, and this reduces a factor of $\width$ in the number of pulls on base arm $i$ (our analysis shows that $C(t)$ is approximately a constant).
In fact, reducing a $\width$ factor in Lemma \ref{Lemma_Complexity} is the key reason that the complexity upper bound of TS-Explore is $\width$ lower than the CLUCB policy in \citep{Chen2014Combinatorial}. 

The novelty of our proof for Lemma \ref{Lemma_Complexity} mainly lies in the event $\mathcal{E}_{2,c}$ (as well as the mechanism of recording all the $\tilde{\Delta}^k_t$'s and focusing on the largest one), i.e., we show that under event $\mathcal{E}_{2,c}$, $\tilde{S}_t$ has some similar properties as the super arm with the largest upper confidence bound (which is used in LUCB-based policies such as CLUCB).
For example, when $\hat{S}_t$ does not equal the optimal super arm $S^*$, $\mathcal{E}_{2,c}$ tells us that there must exist $k$ such that $\sum_{i\in S^*} \theta_i^k(t) - \sum_{i\in \hat{S}_t} \theta_i^k(t) \ge \Delta_{S^*, \hat{S}_t}$.
Along with the fact $k^*_t = \argmax_k \tilde{\Delta}^k_t$, we know $\tilde{\Delta}^{k^*_t }_t \ge \Delta_{S^*, \hat{S}_t}$. 
This means that the reward gap between $\hat{S}_t$ and $\tilde{S}_t$ ($\tilde{S}^{k^*_t }_t$) could be larger than $\Delta_{S^*, \hat{S}_t}$, which implies that $\tilde{S}_t$ is either an insufficiently learned sub-optimal arm or the optimal arm (see details in the complete proof in Appendix \ref{Appendix_A}). 
This method solves the challenge of the uncertainty in random samples, and allows us to use similar analysis techniques (e.g., \cite{Chen2014Combinatorial}) to prove Lemma \ref{Lemma_Complexity}.

%can be more complicated (though the proof idea is similar)

By Lemma \ref{Lemma_Complexity}, if $\forall i, N_i(t) \ge {98\width C(t)L_2(t) \over \Delta_{i,c}^2}$, then TS-Explore must terminate (and output the correct answer). 
Thus, the complexity $Z$ satisfies 
\begin{equation*}
    Z \le \sum_{i\in [m]} {98\width C(Z)L_2(Z) \over\Delta_{i,c}^2} = 98H_{1,c} C(Z)L_2(Z).
\end{equation*}

For $q \le 0.1$, ${1\over \phi^2(q)} = O({1\over \log{1\over q}})$. Then with $C(Z) = {\log (12|\mathcal{I}|^2Z^2/\delta) \over \phi^2(q)}$, $L_2(Z) = \log(12|\mathcal{I}|^2Z^2M(Z)/\delta)$ and $M(Z) = {\log(12|\mathcal{I}|^2Z^2/\delta)\over q}$, we have that (note that $q \ge \delta$):
\begin{equation}\label{Eq_1}
    Z \le O\left(H_{1,c}{\left(\log (|\mathcal{I}|Z) + \log{1\over \delta}\right)^2 \over \log{1\over q}}\right).
\end{equation}
%\left(\log(|\mathcal{I}|Z) + \log {1\over \delta}\right)

Therefore, after some basic calculations (the details are deferred to Appendix \ref{Appendix_B}), we know that
\begin{equation*}
 Z = O\left(H_{1,c}{\left(\log (|\mathcal{I}|H_{1,c}) + \log{1\over \delta}\right)^2 \over \log{1\over q}}\right).
\end{equation*}

\section{Experiments}\label{Appendix_Experiments}

%In this section, we conduct some experiments to compare the complexity performances of efficient learning algorithms in the CMAB setting (i.e., TS-Explore and CLUCB).
%
%our algorithm with CLUCB, and to demonstrate the effectiveness of our algorithm. 
%
%We consider the following combinatorial pure exploration problem instance with parameter $n \in \mathbb{N}^+$, and always choose $q = \delta$ in TS-Explore. %, we always choose $q = \delta$.
%All the results (average complexities and their standard deviations) take an average over 100 independent runs.

In this section, we conduct some experiments to compare the complexity performances of efficient learning algorithms (i.e., TS-Explore and CLUCB \cite{Chen2014Combinatorial}) and the optimal but non-efficient algorithm NaiveGapElim \cite{chen2017nearly} in the CMAB setting.
%
%our algorithm with CLUCB, and to demonstrate the effectiveness of our algorithm. 
%
We consider the following combinatorial pure exploration problem instance with parameter $n \in \mathbb{N}^+$, and always choose $q = \delta$ in TS-Explore. %, we always choose $q = \delta$.
All the results (average complexities and their standard deviations) take an average of 100 independent runs.

\begin{problem}
For fixed value $n$, there are totally $2n$ base arms. For the first $n$ base arms, their expected rewards equal 0.1, and for the last $n$ base arms, their expected rewards equal 0.9. There are only two super arms: % $S_1, S_2$. 
$S_1$ contains the first $n$ base arms and $S_2$ contains the last $n$ base arms. 
\end{problem}

%Since there are only two super arms, our TS-Verify can also be used to do pure exploration, i.e., at time step $t$, if $S_1$ is the optimal super arm in at least $(1-q)M(\delta, q, t)$ sample sets, then we output that $S_1$ is optimal; if $S_2$ is the optimal super arm in at least $(1-q)M(\delta, q, t)$ sample sets, then we output that $S_2$ is optimal; otherwise we pull all the arms in a round robin method. In this way, we can compare TS-Verify with CLUCB directly. 
%For the input $q$ in TS-Verify, we always choose $q = \delta$.

%We first fix $\delta = 0.1$, and compare the complexity of TS-Explore with CLUCB under different $n$ (Fig. \ref{Figure_1}). We can see that when $n$ increases, the complexity of TS-Explore does not increase a lot, while the complexity of CLUCB increases linearly. This accords with our analysis, since $H_{1,c}(n) = 2n \cdot {2n \over (0.8n)^2} = 6.25$ is a constant but $H_{2,c}(n) = 2n \cdot {(2n)^2 \over (0.8n)^2} = 12.5n$ is linear with $n$ (here $H_{1,c}(n), H_{2,c}(n)$ are the values of $H_{1,c}, H_{2,c}$ under the problem instance with parameter $n$, respectively).
We first fix $\delta = 10^{-3}$, and compare the complexity of the above algorithms under different $n$'s (Fig. \ref{Figure_1}). We can see that when $n$ increases, the complexities of TS-Explore and NaiveGapElim do not increase a lot, while the complexity of CLUCB increases linearly. This accords with our analysis, since $H_{0,c}(n) = H_{1,c}(n) = 2n \cdot {2n \over (0.8n)^2} = 6.25$ is a constant but $H_{2,c}(n) = 2n \cdot {(2n)^2 \over (0.8n)^2} = 12.5n$ is linear with $n$ (here $H_{0,c}(n), H_{1,c}(n), H_{2,c}(n)$ are the values of $H_{0,c}, H_{1,c}, H_{2,c}$ under the problem instance with parameter $n$, respectively). 

%Then we fix $n = 2$, and compare the complexity of TS-Explore with CLUCB under different $\delta$ (Fig. \ref{Figure_2}). %The results (average complexities and their standard deviations) take an average over 100 independent runs, and are shown in Fig. \ref{Figure_2}. 
%We can see that when $\delta$ is large, the complexity of TS-Explore decreases as $\delta$ decreases, and when $\delta$ is small, the complexity of TS-Explore increases as $\delta$ decreases. Moreover, the complexity of TS-Explore increases much slower than CLUCB (when $\delta$ decreases). This also accords with our analysis. Note that there is a term $O(H_{1,c}{\log^2 (|\mathcal{I}|H_{1,c}) \over \log{1\over q}})$ in the complexity bound. %in Theorem \ref{Theorem_Explore}. 
%Since we choose $q = \delta$, this term decreases as $\delta$ decreases. When $\delta = 10^{-1}$, this term is very large and becomes the majority term in complexity, and therefore the complexity decreases when $\delta$ decreases from $10^{-1}$ to $10^{-3}$. When $\delta = 10^{-3}$, the term $O(H_{1,c} \log{1\over \delta})$ becomes the majority term in complexity, therefore the complexity increases when $\delta$ decreases from $10^{-3}$ to $10^{-5}$. %Besides, the complexity of TS-Verify should increase slower than CLUCB even when $\delta \to 0$, since the complexity bound for TS-Verify is $O(H_{1,c} \log{1\over \delta})$, the complexity bound for CLUCB is $O(H_{2,c} \log{1\over \delta})$ and $H_{1,c} \le H_{2,c}$.

Then we fix $n = 2$, and compare the complexity of the above algorithms under different $\delta$'s (Fig. \ref{Figure_2}). %The results (average complexities and their standard deviations) take an average over 100 independent runs, and are shown in Fig. \ref{Figure_2}. 
We can see that when $\delta$ is large, the complexity of TS-Explore decreases as $\delta$ decreases, and when $\delta$ is small, the complexity of TS-Explore increases as $\delta$ decreases. Moreover, the complexities of TS-Explore and NaiveGapElim increase much slower than CLUCB (when $\delta$ decreases). This also accords with our analysis. Note that there is a term $O(H_{1,c}{\log^2 (|\mathcal{I}|H_{1,c}) \over \log{1\over q}})$ in our complexity bound. %in Theorem \ref{Theorem_Explore}. 
Since we choose $q = \delta$, this term decreases as $\delta$ decreases. When $\delta = 10^{-1}$, this term is very large and becomes the majority term in complexity, and therefore the complexity decreases when $\delta$ decreases from $10^{-1}$ to $10^{-3}$. When $\delta = 10^{-3}$, the term $O(H_{1,c} \log{1\over \delta})$ becomes the majority term in complexity, therefore the complexity increases when $\delta$ decreases from $10^{-3}$ to $10^{-5}$. 

\begin{figure}[t]
\centering 
\subfigure[$\delta = 10^{-3}$]{ \label{Figure_1} 
\includegraphics[width=2.9in]{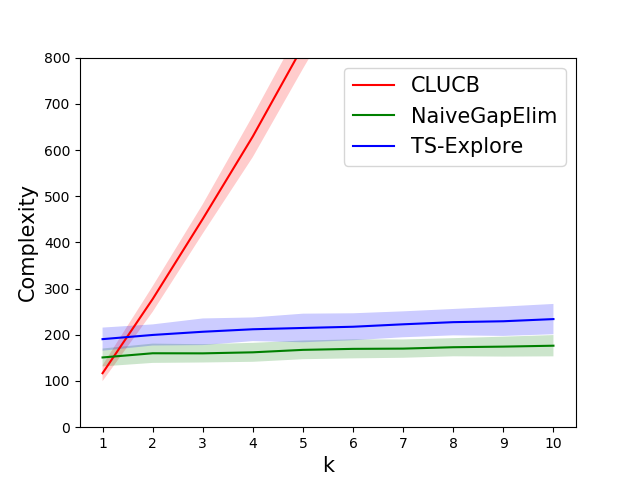}}
\subfigure[$n = 2$]{ \label{Figure_2} 
\includegraphics[width=2.9in]{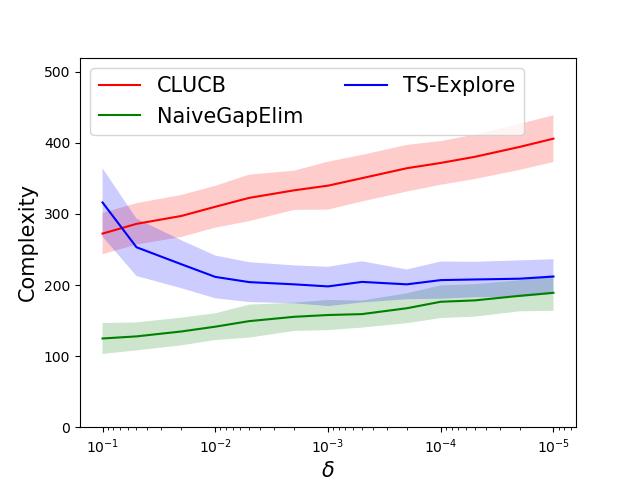}}
\caption{Comparison of TS-Explore, CLUCB and NaiveGapElim.%Here we choose $(\epsilon, \delta) = (1,0.01)$.%\junz{the region depends on $\epsilon$, $\delta$? if so, make explicit?} %: black line is the origin CDF $F(x)$, blue line is the CDF $F'(x)$ after one observation is modified, and red part is the feasible region of $F'(x)$ under the constraints in Lemma \ref{Lemma_DP_F}
}\label{Figure_E1}
\end{figure}

%the complexity bound for TS-Verify is $O(H_{1,c} \log{1\over \delta})$ and the complexity bound for CLUCB is $O(H_{2,c} \log{1\over \delta})$. 

All the experimental results indicate that TS-Explore outperforms CLUCB (especially when the size of the problem is large or the error constraint $\delta$ is small). 
On the other hand, the complexity of our efficient algorithm TS-Explore is only a little higher than the optimal but non-efficient algorithm NaiveGapElim. 
%This demonstrates the effectiveness of our algorithm. 
%
These results demonstrate the effectiveness of our algorithm.

\section{Conclusions}

In this paper, we explore the idea of Thompson Sampling to solve the pure exploration problems under the frequentist setting.
We first propose TS-Explore, an efficient policy that uses random samples to make decisions, and then show that: i) in the combinatorial multi-armed bandit setting, our policy can achieve a lower complexity bound than existing efficient policies; and ii) in the classic multi-armed bandit setting, our policy can achieve an asymptotically optimal complexity bound. %; and ii) in the combinatorial multi-armed bandit case, our algorithm can efficiently achieve a better complexity bound than prior works. %(i.e., does not have assumption on the combinatorial structure and can only use an offline oracle).

%%%%%%%%%%%%%%%%%%

% Our future research topics.

There remain many interesting topics to be further studied, e.g., how to achieve the optimal complexity bound in the combinatorial multi-armed bandit setting by seeking detailed information about the combinatorial structure in TS-Explore; and what are the complexity bounds of using our TS-Explore framework in other pure exploration problems (e.g., dueling bandit and linear bandit). It is also worth studying how to design TS-based pure exploration algorithms for the fixed budget setting.

\section*{Acknowledgement}

This work was supported by the National Key Research and Development Program of China
(2017YFA0700904, 2020AAA0104304),
NSFC Projects (Nos. 62061136001, 62106122, 62076147, U19B2034,
U1811461, U19A2081), Beijing NSF Project (No. JQ19016), Tsinghua-Huawei Joint Research Program, and a grant from
Tsinghua Institute for Guo Qiang.

%The work of Siwei Wang was supported in part by the National Natural Science Foundation of China Grant 62106122.

%To make our theoretical results reproducible, we include the complete proof of Theorem \ref{Theorem_1} in Section \ref{Section_TSPEMAB} and the complete proof of Theorem \ref{Theorem_2} in Appendix \ref{Appendix_Proof}.
%
%Besides, we attach the detailed descriptions of the general explore-then-verify framework \citep{chen2017nearly} in Appendix \ref{Appendix_ETV}. This can help to understand Fact \ref{Fact_Basic}, i.e., how to construct a complete pure exploration algorithm with complexity bound asymptotically the same as a verification algorithm.

% In the unusual situation where you want a paper to appear in the
% references without citing it in the main text, use \nocite
%\nocite{langley00}

\bibliography{example_paper}
\bibliographystyle{icml2022}

%%%%%%%%%%%%%%%%%%%%%%%%%%%%%%%%%%%%%%%%%%%%%%%%%%%%%%%%%%%%%%%%%%%%%%%%%%%%%%%
%%%%%%%%%%%%%%%%%%%%%%%%%%%%%%%%%%%%%%%%%%%%%%%%%%%%%%%%%%%%%%%%%%%%%%%%%%%%%%%
% APPENDIX
%%%%%%%%%%%%%%%%%%%%%%%%%%%%%%%%%%%%%%%%%%%%%%%%%%%%%%%%%%%%%%%%%%%%%%%%%%%%%%%
%%%%%%%%%%%%%%%%%%%%%%%%%%%%%%%%%%%%%%%%%%%%%%%%%%%%%%%%%%%%%%%%%%%%%%%%%%%%%%%
\newpage
\appendix
\onecolumn

\section{The Complete Proof of Lemma \ref{Lemma_Complexity}}\label{Appendix_A}

\textbf{Lemma 4.7} (restated) Under event $\mathcal{E}_{0,c} \land \mathcal{E}_{1,c} \land \mathcal{E}_{2,c}$, a base arm $i$ will not be pulled if $N_i(t) \ge {98\width C(t)L_2(t) \over \Delta_{i,c}^2}$. 

\begin{proof}
We still prove this lemma by contradiction.

Assume that arm $i$ is pulled with $N_i(t) \ge {98\width C(t)L_2(t) \over \Delta_{i,c}^2}$, then there are four probabilities: $(i\in \hat{S}_t) \land (i\in S^*)$, $(i\in \hat{S}_t) \land (i\notin S^*)$, $(i\in \tilde{S}_t) \land (i\in S^*)$, $(i\in \tilde{S}_t) \land (i\notin S^*)$.

Case i): $(i\in \hat{S}_t) \land (i\in S^*)$ or $(i\in \tilde{S}_t) \land (i\notin S^*)$. In this case, $i \in S^* \oplus \tilde{S}_t$ and therefore $\Delta_{i,c} \le \Delta_{S^*,\tilde{S}_t}$.

Since $\hat{S}_t = {\sf Oracle}(\bm{\hat{\mu}}(t))$, we have that $\sum_{j\in \tilde{S}_t \setminus \hat{S}_t}  \hat{\mu}_j(t) \le \sum_{j\in \hat{S}_t \setminus \tilde{S}_t}  \hat{\mu}_j(t)$. By event $\mathcal{E}_{1,c}$, we also have that for any $k$, $|\sum_{j\in \tilde{S}_t \setminus \hat{S}_t} (\theta_j^{k}(t) - \hat{\mu}_j(t))| \le \sqrt{C(t)\sum_{j\in \tilde{S}_t \setminus \hat{S}_t}{2\over N_j(t)}L_2(t)} \le {\Delta_{i,c} \over 7}$, and similarly $|\sum_{j\in \hat{S}_t \setminus \tilde{S}_t} (\theta_j^{k}(t) - \hat{\mu}_j(t))| \le \sqrt{C(t)\sum_{j\in \hat{S}_t \setminus \tilde{S}_t}{2\over N_j(t)}L_2(t)} \le {\Delta_{i,c} \over 7}$ (recall that $N_i(t) \ge {98\width C(t)L_2(t) \over \Delta_{i,c}^2}$ and $i = \argmin_{j\in \hat{S}_t \oplus \tilde{S}_t} N_j(t)$).

Therefore, for any $k$, we have that \begin{eqnarray*}
&&\sum_{j\in \tilde{S}_t \setminus \hat{S}_t}  \theta_j^k(t) - \sum_{j\in \hat{S}_t \setminus \tilde{S}_t}  \theta_j^k(t) \\
&=& \left(\sum_{j\in \tilde{S}_t \setminus \hat{S}_t} \hat{\mu}_j(t) - \sum_{j\in \hat{S}_t \setminus \tilde{S}_t} \hat{\mu}_j(t)\right) + \left(\sum_{j\in \tilde{S}_t \setminus \hat{S}_t} (\theta_j^k(t) - \hat{\mu}_j(t)) \right) - \left( \sum_{j\in \hat{S}_t \setminus \tilde{S}_t} (\theta_j^{k}(t) - \hat{\mu}_j(t)) \right)\\
&\le& \left(\sum_{j\in \tilde{S}_t \setminus \hat{S}_t} \hat{\mu}_j(t) - \sum_{j\in \hat{S}_t \setminus \tilde{S}_t} \hat{\mu}_j(t)\right) + \left(|\sum_{j\in \tilde{S}_t \setminus \hat{S}_t} (\theta_j^k(t) - \hat{\mu}_j(t)) |\right) + \left(| \sum_{j\in \hat{S}_t \setminus \tilde{S}_t} (\theta_j^{k}(t) - \hat{\mu}_j(t)) |\right)\\
&\le& 0  +  {2\over 7}\Delta_{i,c} \\
&\le& {2\over 7}\Delta_{i,c}.
\end{eqnarray*}
This means that $\tilde{\Delta}_t^{k_t^*}  = \sum_{j\in \tilde{S}_t \setminus \hat{S}_t}  \theta_j^{k_t^*}(t) - \sum_{j\in \hat{S}_t \setminus \tilde{S}_t}  \theta_j^{k_t^*}(t) \le {2\over 7}\Delta_{i,c}$.

%On the other hand, we do not output $\hat{S}_t$ but choose an arm $i$ to pull, this means that $\sum_{i\in \tilde{S}_t \setminus \hat{S}_t} \theta_i^{k_t^*}(t) \ge \sum_{i\in \hat{S}_t \setminus \tilde{S}_t} \theta_i^{k_t^*}(t)$. Also note that by event $\mathcal{E}_{1,c}$, for any $k$, we have that  $|\sum_{i\in \tilde{S}_t \setminus \hat{S}_t} (\theta_i^{k}(t) - \hat{\mu}_i(t))| \le \sqrt{C(t)\sum_{i\in \tilde{S}_t \setminus \hat{S}_t}{1\over N_i(t)}L_2(t)} \le {\Delta_{i,c} \over 7}$, and similarly $|\sum_{i\in \hat{S}_t \setminus \tilde{S}_t} (\theta_i^{k}(t) - \hat{\mu}_i(t))| \le \sqrt{C(t)\sum_{i\in \hat{S}_t \setminus \tilde{S}_t}{1\over N_i(t)}L_2(t)} \le {\Delta_{i,c} \over 7}$ (recall that $N_i(t) \ge {49\width C(t)L_2(t) \over \Delta_{i,c}^2}$ and $i = \argmin_{i\in \hat{S}_t \oplus \tilde{S}_t} N_i(t)$). 

Moreover, since $\sum_{j\in \tilde{S}_t \setminus \hat{S}_t} \theta_j^{k_t^*}(t) \ge \sum_{j\in \hat{S}_t \setminus \tilde{S}_t} \theta_j^{k_t^*}(t)$, we have that 
\begin{eqnarray}
\nonumber&&\sum_{j\in \tilde{S}_t \setminus \hat{S}_t}  \hat{\mu}_j(t) - \sum_{j\in \hat{S}_t \setminus \tilde{S}_t}  \hat{\mu}_j(t) \\
\nonumber&=& \left(\sum_{j\in \tilde{S}_t \setminus \hat{S}_t} \theta_j^{k_t^*}(t) - \sum_{j\in \hat{S}_t \setminus \tilde{S}_t} \theta_j^{k_t^*}(t)\right) - \left(\sum_{j\in \tilde{S}_t \setminus \hat{S}_t} (\theta_j^{k_t^*}(t) - \hat{\mu}_j(t)) \right) + \left( \sum_{j\in \hat{S}_t \setminus \tilde{S}_t} (\theta_j^{k_t^*}(t) - \hat{\mu}_j(t)) \right)\\
\nonumber&\ge& \left(\sum_{j\in \tilde{S}_t \setminus \hat{S}_t} \theta_j^{k_t^*}(t) - \sum_{j\in \hat{S}_t \setminus \tilde{S}_t} \theta_j^{k_t^*}(t)\right) - \left(|\sum_{j\in \tilde{S}_t \setminus \hat{S}_t} (\theta_j^{k_t^*}(t) - \hat{\mu}_j(t))| \right) - \left( |\sum_{j\in \hat{S}_t \setminus \tilde{S}_t} (\theta_j^{k_t^*}(t) - \hat{\mu}_j(t))| \right)\\
\nonumber&\ge& 0  - {2\over 7}\Delta_{i,c} \\
\label{Eq_2}&\ge& - {2\over 7}\Delta_{i,c}.
\end{eqnarray}

%Moreover,  for any $k$, we have that 
%\begin{eqnarray*}
%&&\sum_{i\in \tilde{S}_t \setminus \hat{S}_t}  \hat{\theta}_i^k(t) - \sum_{i\in \hat{S}_t \setminus \tilde{S}_t}  \hat{\theta}_i^k(t) \\
%&=& \left(\sum_{i\in \tilde{S}_t \setminus \hat{S}_t} \hat{\mu}_i(t) - \sum_{i\in \hat{S}_t \setminus \tilde{S}_t} \hat{\mu}_i(t)\right) + \left(\sum_{i\in \tilde{S}_t \setminus \hat{S}_t} (\theta_i^k(t) - \hat{\mu}_i(t)) \right) - \left( \sum_{i\in \hat{S}_t \setminus \tilde{S}_t} (\theta_i^{k}(t) - \hat{\mu}_i(t)) \right)\\
%&\le& \left(\sum_{i\in \tilde{S}_t \setminus \hat{S}_t} \hat{\mu}_i(t) - \sum_{i\in \hat{S}_t \setminus \tilde{S}_t} \hat{\mu}_i(t)\right) + \left(|\sum_{i\in \tilde{S}_t \setminus \hat{S}_t} (\theta_i^k(t) - \hat{\mu}_i(t)) |\right) + \left(| \sum_{i\in \hat{S}_t \setminus \tilde{S}_t} (\theta_i^{k}(t) - \hat{\mu}_i(t)) |\right)\\
%&\le& 0  +  {2\over 7}\Delta_{i,c} \\
%&\le& {2\over 7}\Delta_{i,c}.
%\end{eqnarray*}
%This means that $\tilde{\Delta}_t^{k_t^*} \le {2\over 7}\Delta_{i,c}$. 

On the other hand, by event $\mathcal{E}_{2,c}$, we know that $\exists k'$ such that $\sum_{j\in S^*} \theta_j^{k'}(t) - \sum_{j\in \tilde{S}_t} \theta_j^{k'}(t) \ge \Delta_{S^*,\tilde{S}_t}$. 
Then 
\begin{eqnarray}
\nonumber&&\sum_{j\in S^*} \theta_j^{k'}(t) - \sum_{j\in \hat{S}_t} \theta_j^{k'}(t) \\
\nonumber&=&\left(\sum_{j\in S^*} \theta_j^{k'}(t) - \sum_{j\in \tilde{S}_t} \theta_j^{k'}(t) \right) + \left(\sum_{j\in \tilde{S}_t} \theta_j^{k'}(t)  - \sum_{j\in \hat{S}_t} \theta_j^{k'}(t)  \right)\\
\nonumber&\ge& \Delta_{S^*,\tilde{S}_t} + \left(\sum_{j\in \tilde{S}_t} \theta_j^{k'}(t)  - \sum_{j\in \hat{S}_t} \theta_j^{k'}(t)  \right)\\
\nonumber&=& \Delta_{S^*,\tilde{S}_t} + \left(\sum_{j\in \tilde{S}_t \setminus \hat{S}_t} \theta_j^{k'}(t)  - \sum_{j\in \hat{S}_t \setminus \tilde{S}_t} \theta_j^{k'}(t)  \right)\\
\nonumber&=& \Delta_{S^*,\tilde{S}_t} + \left(\sum_{j\in \tilde{S}_t \setminus \hat{S}_t} \hat{\mu}_j(t)  - \sum_{j\in \hat{S}_t \setminus \tilde{S}_t} \hat{\mu}_j(t)  \right) + \left(\sum_{j\in \tilde{S}_t \setminus \hat{S}_t} (\theta_j^{k'}(t)  -  \hat{\mu}_j(t))  \right) - \left(\sum_{j\in \hat{S}_t \setminus \tilde{S}_t} (\theta_j^{k'}(t) -  \hat{\mu}_j(t))   \right)\\
\label{Eq_3}&\ge & \Delta_{S^*,\tilde{S}_t} - {2\over 7}\Delta_{i,c} - \left(|\sum_{j\in \tilde{S}_t \setminus \hat{S}_t} (\theta_j^{k'}(t)  -  \hat{\mu}_j(t))  |\right) - \left(|\sum_{j\in \hat{S}_t \setminus \tilde{S}_t} (\theta_j^{k'}(t) -  \hat{\mu}_j(t))  | \right)\\
\nonumber&\ge &\Delta_{S^*,\tilde{S}_t} - {4\over 7}\Delta_{i,c}\\
\nonumber&>& {2\over 7}\Delta_{i,c},
\end{eqnarray}
where Eq. \eqref{Eq_3} is because of Eq. \eqref{Eq_2}.
This means that $\tilde{\Delta}^{k'}_t > {2\over 7}\Delta_{i,c}$, which contradicts with $\tilde{\Delta}_t^{k_t^*} \le {2\over 7}\Delta_{i,c}$ (since $k_t^* = \argmax_{k} \tilde{\Delta}^k_t$).

Case ii): $(i\in \hat{S}_t) \land (i\notin S^*)$ or $(i\in \tilde{S}_t) \land (i\in S^*)$. In this case, $i \in S^* \oplus \hat{S}_t$ and therefore $\Delta_{i,c} \le \Delta_{S^*,\hat{S}_t}$.

By event $\mathcal{E}_{2,c}$, we know that $\exists k, \sum_{j\in S^*} \theta_j^k(t) - \sum_{j\in \hat{S}_t} \theta_j^k(t) \ge \Delta_{S^*,\hat{S}_t}$. Hence $\tilde{\Delta}_t^k \ge \Delta_{S^*,\hat{S}_t}$. Moreover, since $k_t^* = \argmax_k \tilde{\Delta}_t^k$, we have that $\sum_{j\in \tilde{S}_t} \theta_j^{k_t^*}(t) - \sum_{j\in \hat{S}_t} \theta_j^{k_t^*}(t) \ge \Delta_{S^*,\hat{S}_t}$, which is the same as $\sum_{j\in \tilde{S}_t \setminus \hat{S}_t} \theta_j^{k_t^*}(t) - \sum_{j\in \hat{S}_t \setminus \tilde{S}_t} \theta_j^{k_t^*}(t) \ge \Delta_{S^*,\hat{S}_t}$. On the other hand, by event $\mathcal{E}_{1,c}$, we have that $|\sum_{j\in \tilde{S}_t \setminus \hat{S}_t} (\theta_j^{k_t^*}(t) - \hat{\mu}_j(t))| \le \sqrt{C(t)\sum_{j\in \tilde{S}_t \setminus \hat{S}_t}{2\over N_j(t)}L_2(t)} \le {\Delta_{i,c} \over 7}$, and similarly $|\sum_{j\in \hat{S}_t \setminus \tilde{S}_t} (\theta_j^{k_t^*}(t) - \hat{\mu}_j(t))| \le \sqrt{C(t)\sum_{j\in \hat{S}_t \setminus \tilde{S}_t}{2\over N_j(t)}L_2(t)} \le {\Delta_{i,c} \over 7}$ (recall that $N_i(t) \ge {98\width C(t)L_2(t) \over \Delta_{i,c}^2}$ and $i = \argmin_{j\in \hat{S}_t \oplus \tilde{S}_t} N_j(t)$). 

Therefore,
\begin{eqnarray*}
&&\sum_{j\in \tilde{S}_t \setminus \hat{S}_t}  \hat{\mu}_j(t) - \sum_{j\in \hat{S}_t \setminus \tilde{S}_t}  \hat{\mu}_j(t) \\
&=& \left(\sum_{j\in \tilde{S}_t \setminus \hat{S}_t} \theta_j^{k_t^*}(t) - \sum_{j\in \hat{S}_t \setminus \tilde{S}_t} \theta_j^{k_t^*}(t)\right) - \left(\sum_{j\in \tilde{S}_t \setminus \hat{S}_t} (\theta_j^{k_t^*}(t) - \hat{\mu}_j(t)) \right) + \left( \sum_{j\in \hat{S}_t \setminus \tilde{S}_t} (\theta_j^{k_t^*}(t) - \hat{\mu}_j(t)) \right)\\
&\ge& \left(\sum_{j\in \tilde{S}_t \setminus \hat{S}_t} \theta_j^{k_t^*}(t) - \sum_{j\in \hat{S}_t \setminus \tilde{S}_t} \theta_j^{k_t^*}(t)\right) - \left(|\sum_{j\in \tilde{S}_t \setminus \hat{S}_t} (\theta_j^{k_t^*}(t) - \hat{\mu}_j(t))| \right) - \left( |\sum_{j\in \hat{S}_t \setminus \tilde{S}_t} (\theta_j^{k_t^*}(t) - \hat{\mu}_j(t))| \right)\\
&\ge& \Delta_{S^*, \hat{S}_t} - {2\over 7}\Delta_{i,c} \\
&>& 0.
\end{eqnarray*}
This means that $\sum_{j\in \tilde{S}_t }  \hat{\mu}_j(t) - \sum_{j\in \hat{S}_t}  \hat{\mu}_j(t) > 0$, which contradicts with the fact that $\hat{S}_t = {\sf Oracle}(\bm{\hat{\mu}}(t))$.
\end{proof}

\section{How to Obtain Complexity Upper Bound by Eq. \eqref{Eq_1}}\label{Appendix_B}

Eq. \eqref{Eq_1} is restated below: 

\begin{equation*}
    Z \le O\left(H_{1,c}{\left(\log (|\mathcal{I}|Z) + \log{1\over \delta}\right)^2 \over \log{1\over q}}\right).
\end{equation*}

We can then use the following lemma to find an upper bound for complexity $Z$. 
\begin{lemma}\label{Lemma_Count}
    Given $K$ functions $f_1(x), \cdots, f_K(x)$ and $K$ positive values $X_1, \cdots, X_K$, if $\forall x \ge X_k, Kf_k(x) < x$ holds for all $1\le k \le K$, then for any $x\ge \sum_{k} X_k, \sum_k f_k(x) < x$.
\end{lemma}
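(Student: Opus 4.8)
The plan is to prove the statement directly and pointwise; neither induction on $K$ nor an argument by contradiction is needed. First I would fix an arbitrary real number $x$ satisfying $x \ge \sum_{k=1}^{K} X_k$, and reduce the goal to verifying the single inequality $\sum_{k=1}^K f_k(x) < x$ at this particular $x$.

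The one genuine observation is that, because every $X_j$ is positive, the aggregate threshold $\sum_{k=1}^K X_k$ dominates each individual threshold. Indeed, for each fixed index $k$,
\begin{equation*}
x \;\ge\; \sum_{j=1}^K X_j \;=\; X_k + \sum_{j \ne k} X_j \;\ge\; X_k,
\end{equation*}
since the omitted terms $X_j$ with $j \ne k$ are all nonnegative. Hence $x \ge X_k$ holds for every $k$ simultaneously, so the hypothesis of the lemma can be applied to each index $k$ at this same point $x$.

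Invoking that hypothesis then gives $K f_k(x) < x$, that is $f_k(x) < x/K$, for each $k \in \{1,\dots,K\}$. Summing these $K$ inequalities yields
\begin{equation*}
\sum_{k=1}^K f_k(x) \;<\; \sum_{k=1}^K \frac{x}{K} \;=\; x,
\end{equation*}
which is exactly the desired conclusion. As $x$ was an arbitrary point above $\sum_k X_k$, the claim follows for all such $x$.

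The hard part is essentially conceptual rather than computational: the only thing one must notice is that positivity of the $X_k$ is precisely what allows a bound stated at the single aggregate threshold $\sum_k X_k$ to activate all $K$ per-index hypotheses at once. Once that reduction $x \ge \sum_k X_k \Rightarrow x \ge X_k$ for every $k$ is in hand, the remainder is the one-line summation of $f_k(x) < x/K$ over $k$, so I expect no further obstacles.
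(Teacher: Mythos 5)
Your proof is correct and follows essentially the same route as the paper's: positivity of the $X_k$ gives $x \ge X_k$ for every $k$ simultaneously, after which the per-index hypotheses $Kf_k(x) < x$ are summed (you divide by $K$ before summing, the paper after — a cosmetic difference only). No gaps.
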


\begin{proof}
    Since $X_1, \cdots, X_K$ are positive values, for any $x\ge \sum_{k} X_k$, we must have that $x \ge X_k$.
   Therefore $Kf_k(x) < x$, which implies that
    \begin{eqnarray*}
    \sum_{k} Kf_k(x) < \sum_k x.
    \end{eqnarray*}
    This is the same as $\sum_k f_k(x) < x$.
\end{proof}

To apply Lemma \ref{Lemma_Count}, we set $f_1(Z) = H_{1,c}{\log^2(|\mathcal{I}|Z)\over \log{1\over q}}$, $f_2(Z) = H_{1,c}{\log(|\mathcal{I}|Z)\log{1\over \delta}\over \log{1\over q}}$, and $f_3(Z) = H_{1,c}{\log^2{1\over \delta}\over \log{1\over q}}$.
After some basic calculations, we get that $X_1 \le c_1 H_{1,c}{\log^2(|\mathcal{I}|H_{1,c})\over \log{1\over q}}$, $X_2 \le c_{2,1}H_{1,c}{\log(|\mathcal{I}|H_{1,c})\log{1\over \delta}\over \log{1\over q}} + c_{2,2}H_{1,c}{\log^2{1\over \delta}\over \log{1\over q}}$ and $X_3 \le c_3H_{1,c}{\log^2{1\over \delta}\over \log{1\over q}}$. Here $c_1, c_{2,1}, c_{2,2}, c_3$ are universal constants. 

%$X_3 = c_3H_{1,c}{\log^2{1\over \delta}\over \log{1\over q}})$, $X_2 = c_{2,1}H_{1,c}{\log(|\mathcal{I}|H_{1,c})\log{1\over \delta}\over \log{1\over q}} + X_3)$ and $X_1 = O(H_{1,c}{\log^2(|\mathcal{I}|H_{1,c})\over \log{1\over q}})$.

%$X_1 = \Theta(H_{1,c}{\log^2(|\mathcal{I}|H_{1,c})\over \log{1\over q}})$, $X_2 = H_{1,c}{\log(|\mathcal{I}|H_{1,c})\log{1\over \delta}\over \log{1\over q}}$, and $X_3 = \Theta(H_{1,c}{\log^2{1\over \delta}\over \log{1\over q}})$. 

%%%% Lemma 1  Lemma \ref{Lemma_Count} shows that for any $x \ge \sum_{k=1}^4 X_k$, $\sum_{k=1}^4 f_k(x) \le x$. This implies that $Z \le \sum_{k=1}^4 X_k$, i.e., 
%%%% Lemma 1  \begin{equation*}
%%%% Lemma 1      Z = O\left(\left(H_m\log{1\over \delta} + H_m\log(mH_m)\right){\log{1\over \delta} \over \log{1\over q}} + H_m{\log^2(mH_m) \over \log{1\over q}}\right).
%%%% Lemma 1  \end{equation*}

%Therefore, after some basic calculations, we know that
%\begin{equation*}
% Z = O\left(H_m{\log (mH_m) + \log{1\over \delta} \over \log{1\over q}}\left(\log(mH_m) + \log {1\over \delta}\right)\right).
%\end{equation*}

Then we know that for $Z \ge \Theta(H_{1,c}(\log{1\over \delta} + \log(|\mathcal{I}|H_{1,c})){\log{1\over \delta} \over \log{1\over q}} + H_{1,c}{\log^2(|\mathcal{I}|H_{1,c}) \over \log{1\over q}})$, $f_1(Z) + f_2(Z) + f_3(Z) < Z$ (by Lemma \ref{Lemma_Count}).
This contradicts with Eq. (\ref{Eq_1}).
Therefore, we know that
\begin{equation*}
Z = O\left(H_{1,c}{\left(\log (|\mathcal{I}|H_{1,c}) + \log{1\over \delta}\right)^2 \over \log{1\over q}}\right) = O\left(H_{1,c}\left(\log{1\over \delta} + \log(|\mathcal{I}|H_{1,c})\right){\log{1\over \delta} \over \log{1\over q}} + H_{1,c}{\log^2(|\mathcal{I}|H_{1,c}) \over \log{1\over q}}\right),
\end{equation*}
and this is the complexity upper bound in Theorem \ref{Theorem_Explore}.

\section{Discussions about the Optimal Algorithms for Combinatorial Pure Exploration}\label{Appendix_C}

\citet{chen2017nearly} prove that the complexity lower bound for combinatorial pure exploration is $\Omega(H_{0,c}\log{1\over \delta})$, where $H_{0,c}$ is the optimal value of the following convex program (here $\Delta_{S^*,S} = \sum_{i\in S^*} \mu_i - \sum_{i\in S} \mu_i$):
\begin{eqnarray*}
 \min && \sum_{i\in [m]} N_m\\
 \mathrm{s.t.} && \sum_{i\in S^* \oplus S} {1\over N_i} \le \Delta_{S^*, S}^2, \quad\forall S \in \mathcal{I}, S \ne S^*
\end{eqnarray*}

In other words, for any correct combinatorial pure exploration algorithm, $[{\E[N_1] \over \log{1\over \delta}}, {\E[N_2] \over \log{1\over \delta}}, \cdots, {\E[N_m] \over \log{1\over \delta}}]$ must be a feasible solution for the above convex program, where $\E[N_i]$ represents the expected number of pulls on base arm $i$.
%
%To achieve an optimal solution, 
We say base arm $i$ needs exploration the most at time $t$ if $\alpha N_i^* \log{1\over \delta} - N_i(t)$ is positive, where $\alpha$ is some universal constant and $N_i^*$ is the value of $N_i$ in the optimal solution $H_{0,c}$ (note that there may be several base arms that need exploration the most in one time step). 
%
%the largest one among the set of base arms, where $\alpha$ is some universal constant, $N_i^*$ is the value of $N_i$ in the optimal solution $H_{0,c}$. 
%
By this definition, if we always pull a base arm that needs exploration the most, then the frequency of pulling each base arm converges to the optimal solution of $H_{0,c}$, which leads to an optimal complexity upper bound. 

However, the simple offline oracle used in TS-Explore (as described in Section \ref{Section_Model_CMAB}) is not enough to look for a base arm that needs exploration the most. In fact, both the existing optimal policies \citep{chen2017nearly,jourdan2021efficient} not only need to use this simple offline oracle, but also require some other mechanisms to explore detailed information about the combinatorial structure of the problem instance to look for a base arm that needs exploration the most.
%, which may lead to an exponential time cost and make them hard to implement. 
The algorithms in \cite{chen2017nearly} need to record all the super arms in $\mathcal{I}$ with an empirical mean larger than some threshold $\eta$. This is one kind of information about the combinatorial structure that can help to find out a base arm that needs exploration the most. %(here we say base arm $i$ needs exploration the most at time $t$ if $\alpha N_i^* \log{1\over \delta} - N_i(t)$ is the largest one among the set of base arms, where $\alpha$ is some universal constant, $N_i^*$ is the value of $N_i$ in the optimal solution $H_{0,c}$)
Nevertheless, the authors only provide an efficient way to do this when the combinatorial structure satisfies some specific constraints.
In the most general setting, the algorithms in \cite{chen2017nearly} must pay an exponential time cost for collecting the detailed information. 
As for \citep{jourdan2021efficient}, the best-response oracle used by the $\lambda$-player needs to return a super arm within set $N(S_t)$ that has the shortest distance to a fixed target. 
%
%has time cost scaled with $|N(S_t)|$, w
Here $S_t$ is a super arm, $N(S_t)$ represents the set of super arms whose cells’ boundaries intersect the boundary of the cell of $S_t$, and the cell of a super arm $S_t$ is defined as all the possible parameter sets $[\theta_1, \theta_2, \cdots, \theta_m]$ in which $S_t$ is the optimal super arm. 
%
%This is because it also require some kind of information about the combinatorial structure
This is another kind of information about the combinatorial structure that can help to find out a base arm that needs exploration the most.
%
%Similar with \cite{chen2017nearly}, 
Nevertheless, this best-response oracle also has an exponential time cost (which is scaled with $|N(S_t)|$).
%
%scaled with $|N(S_t)|$,
%though the size of $N(S_t)$ is always smaller than $\mathcal{I}$, 
%which is still exponential with $m$. %Therefore, the CombGame algorithm in \citep{jourdan2021efficient} either suffers from an exponential time complexity, or requires more powerful offline oracles to substitute the origin best-response oracle used by the $\lambda$-player. 
%
%. This means that its time complexity can be exponential with $m$.
%
By using these exponential time cost mechanisms, the optimal algorithms \citep{chen2017nearly,jourdan2021efficient} can find out a base arm that needs exploration the most, which is critical to achieving the optimal complexity upper bound. 
%
%And if we always pull the base arm that needs exploration the most, then the frequency of pulling each base arm converges to the optimal solution of $H_{0,c}$, 
%and this can achieve the optimal complexity upper bound. 
%
%However, the basic offline oracle in TS-Verify can only find several insufficiently learned super arms (i.e., the super arms that have been optimal in at least one sample set), and cannot tell us which base arm (in these insufficiently learned super arms) needs exploration the most. 
%

In this paper, to make TS-Explore efficient in the most general setting, we only use the simple offline oracle in our algorithm and our mechanism can only inform us of one of the violated constraints in the optimization problem (if all the constraints are not violated, TS-Explore will output the correct optimal arm). This means that we know nothing about the combinatorial structure of the problem instance. %For example, consider the case that $\hat{S}_t = S^*$. In this case, the arm $\tilde{S}_t$ satisfies that $\sum_{i\in \tilde{S}_t} \theta_i^{k_t^*}(t) - \sum_{i\in S^*} \theta_i^{k_t^*}(t) > 0$, which implies that with high probability, the constraint $\sum_{i\in \tilde{S}_t \setminus S^*} {1\over N_i(t)} + \sum_{i\in S^* \setminus \tilde{S}_t} {1\over N_i(t)} \le \Delta_{S^*, \tilde{S}_t}^2$ is violated. 
%
%
%An ideal solution idea here is to let the frequency of pulling each base arm converges to the optimal solution of the above optimization problem. However, since we know nothing about the combinatorial structure of the problem instance, 
Therefore, the best thing we can do is to treat all the base arms in the violated constraint equally, i.e., we choose to pull the base arm (in the violated constraint) with the smallest number of pulls. %, and this is what we do in TS-Explore.
This leads to a complexity upper bound of $O(H_{1,c}\log{1\over \delta})$. %(roughly speaking, this is the solution of the above optimization problem that in each constraint, the values $N_i$'s are the same).
If we want %to use a more powerful offline oracle in 
TS-Explore to achieve an optimal complexity upper bound $O(H_{0,c}\log{1\over \delta})$, then we need to %take the insufficiently learned super arms (as well as the random samples, empirical means and numbers of observations for base arms in them) as input, and 
know which base arm %in these insufficiently learned super arms 
needs exploration the most, e.g., by applying a powerful offline oracle that takes the empirical means and random samples as input and outputs a base arm that needs exploration the most. %then by applying this offline oracle, the complexity upper bound of TS-Explore can be reduced to $O(H_{0,c}\log{1\over \delta})$.
How to design such offline oracles %(to look for the base arm that needs exploration the most in our TS-Explore algorithm) 
and how to implement them efficiently is one of our future research topics.

\section{Concentration Inequality of Sub-Gaussian Random Variables}\label{Appendix_D}

\begin{fact}%(Hoeffding's Inequality)
If $X$ is zero-mean and $\sigma^2$ sub-Gaussian, then
\begin{equation*}
    \Pr[|X| > \epsilon] \le 2\exp(-{\epsilon^2 \over 2\sigma^2}).
\end{equation*}
\end{fact}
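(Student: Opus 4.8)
The plan is to prove this two-sided tail bound by the standard Chernoff (Cram\'er) method, exploiting the defining property of a zero-mean $\sigma^2$ sub-Gaussian variable: that its moment generating function obeys $\E[e^{\lambda X}] \le e^{\sigma^2 \lambda^2 / 2}$ for every $\lambda \in \mathbb{R}$. Since the statement concerns $\Pr[|X| > \epsilon]$, I would first bound the upper tail $\Pr[X > \epsilon]$, then obtain the lower tail by a symmetry argument, and finally combine the two via a union bound, which is exactly what produces the factor of $2$ in the claimed bound.

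First I would fix any $\lambda > 0$ and write $\Pr[X > \epsilon] = \Pr[e^{\lambda X} > e^{\lambda \epsilon}]$, then apply Markov's inequality to the nonnegative variable $e^{\lambda X}$ to get $\Pr[X > \epsilon] \le e^{-\lambda \epsilon}\,\E[e^{\lambda X}] \le e^{-\lambda \epsilon + \sigma^2 \lambda^2 / 2}$, where the second inequality is precisely the sub-Gaussian MGF bound. The exponent is a convex quadratic in $\lambda$, minimized at $\lambda^\star = \epsilon / \sigma^2$, and plugging this in collapses the exponent to $-\epsilon^2 / (2\sigma^2)$, giving $\Pr[X > \epsilon] \le e^{-\epsilon^2 / (2\sigma^2)}$. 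Because $-X$ is again zero-mean and satisfies the same MGF bound (the condition is even in $\lambda$, since $\E[e^{\lambda(-X)}] = \E[e^{-\lambda X}] \le e^{\sigma^2 \lambda^2/2}$), the identical argument yields $\Pr[X < -\epsilon] = \Pr[-X > \epsilon] \le e^{-\epsilon^2 / (2\sigma^2)}$. A union bound over these two disjoint events then delivers $\Pr[|X| > \epsilon] \le 2 e^{-\epsilon^2 / (2\sigma^2)}$, as required.

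I do not expect a genuine obstacle here, as this is a textbook concentration result and the constant in the tail is inherited directly from the constant in the MGF bound. The only point that needs care is to make explicit which definition of sub-Gaussianity is in force: with the MGF formulation above, the optimization over $\lambda$ and the symmetry of the lower tail are the sole ingredients. If instead sub-Gaussianity were taken through a tail or Orlicz-norm condition, an extra constant-tracking step would be needed first to recover the $e^{\sigma^2 \lambda^2 / 2}$ form, but that would at most change universal constants and not the structure of the argument.
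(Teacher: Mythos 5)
Your proof is correct and complete: the Chernoff argument via Markov's inequality on $e^{\lambda X}$, optimization at $\lambda^\star = \epsilon/\sigma^2$, symmetry for the lower tail, and a union bound is the canonical derivation of this bound. The paper itself states this Fact without any proof (it is invoked as a standard result, with sub-Gaussianity used in the MGF sense, e.g.\ $\hat{\mu}_i(t)-\mu_i$ being $\tfrac{1}{4N_i(t)}$ sub-Gaussian via Hoeffding's lemma), so your argument supplies exactly the justification the paper takes for granted, under the same definition the paper implicitly uses.
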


%\section{You \emph{can} have an appendix here.}

%You can have as much text here as you want. The main body must be at most $8$ pages long.
%For the final version, one more page can be added.
%If you want, you can use an appendix like this one, even using the one-column format.

%%%%%%%%%%%%%%%%%%%%%%%%%%%%%%%%%%%%%%%%%%%%%%%%%%%%%%%%%%%%%%%%%%%%%%%%%%%%%%%
%%%%%%%%%%%%%%%%%%%%%%%%%%%%%%%%%%%%%%%%%%%%%%%%%%%%%%%%%%%%%%%%%%%%%%%%%%%%%%%

\end{document}